\renewcommand\footnotetextcopyrightpermission[1]{}
\newcommand\blfootnote[1]{%
  \begingroup
  \renewcommand\thefootnote{}\footnote{#1}%
  \addtocounter{footnote}{-1}%
  \endgroup
}
\title{Using Deep Learning to Bootstrap Abstractions for \\  Hierarchical Robot Planning \\ (Extended Version)}
\author{Naman Shah}
\affiliation{
  \institution{Arizona State University}
  \city{Tempe}
  \country{USA}}
\email{namanshah@asu.edu}
\author{Siddharth Srivastava}
\affiliation{
  \institution{Arizona State University}
  \city{Tempe}
  \country{USA}}
\email{siddharths@asu.edu}
\begin{abstract}

  This paper addresses the problem of learning abstractions that
  boost robot planning performance while providing strong
  guarantees of reliability.  Although state-of-the-art
  hierarchical robot planning algorithms allow robots to
  efficiently compute long-horizon motion plans for achieving user
  desired tasks, these methods typically rely upon environment-dependent state and action
  abstractions that need to be hand-designed by experts.

    We present a new approach for bootstrapping the entire
  hierarchical planning process. This allows us to compute abstract states and
  actions for new environments automatically using
  the critical regions predicted by a deep neural network with an
  auto-generated robot-specific architecture. We show that the learned
  abstractions can be used with a novel multi-source bi-directional hierarchical
  robot planning algorithm that is sound and probabilistically
  complete. An extensive empirical evaluation on twenty different
  settings using holonomic and non-holonomic robots shows that (a) our learned abstractions
  provide the information necessary for efficient multi-source hierarchical
  planning; and that (b) this approach of  learning, abstractions, and
  planning outperforms state-of-the-art baselines by nearly a
  factor of ten in terms of planning time on test environments not
  seen during training.

\end{abstract}
\keywords{Learning Abstractions for Planning; Deep Learning, Hierarchical Planning; Motion Planning; Learning for Motion Planning}
\newcommand{\BibTeX}{\rm B\kern-.05em{\sc i\kern-.025em b}\kern-.08em\TeX}
\newtheorem{theorem}{Theorem}[section]
\newtheorem{definition}{Definition}
\newcommand\X{{\mathcal{X}}}
\newcommand\Xf{{\mathcal{X}_{\text{free}}}}
\newcommand\Xo{{\mathcal{X}_{\text{obs}}}}
\newcommand\Sm{{\mathcal{S}}}
\begin{document}


\pagestyle{fancy}
\fancyhead{}

\cfoot{\thepage}


\maketitle 

\section{Introduction}
\label{sec:intro}
\blfootnote{This paper is originally published in proc. of $21^{st}$ International Conference on Autonomous Agents and Multiagent Systems (AAMAS, 2022). Please cite the original work as: \\ Naman Shah and Siddharth Srivastava. 2022. Using Deep Learning to Boot-
strap Abstractions for Hierarchical Robot Planning. In
Proc. of the 21st International Conference on Autonomous Agents and Multi-agent Systems (AAMAS 2022), Online, May 9-13, 2022, IFAAMAS, 9 pages. }

Autonomous robots need to be able to efficiently compute long-horizon motion plans for achieving user desired tasks~\citep{lo2018petlon,duckworth2016unsupervised,luperto2020exploration}. E.g., consider a scenario where a robot $R$ in a household environment is tasked to reach kitchen $K$ from its current location $B1$ (Fig.~\ref{fig:example1}). State-of-the-art motion planning algorithms such as PRM~\citep{kavraki1996probabilistic} and RRT~\citep{lavalle1998rapidly} use random sampling of low-level configurations to compute a path from the robot's current location $B1$ to its target location $K$. Such sampling-based methods fail to efficiently sample configurations from confined spaces such as doorways and corridors under uniform sampling ~\citep{dan_llp}.

On the other hand, humans tend to reason using abstract, high-level actions. E.g., in the same scenario, we would use high-level (abstract) actions such as  \emph{``go out of the room B1''}, \emph{``pass through the corridor''}, and \emph{``enter the kitchen K''}. These abstract actions allow us to reason over a long-horizon easily.
Currently, domain experts need to create such abstractions by hand. This limits the scope and scalability of approaches for hierarchical planning (e.g.,~\citep{garrett2020pddlstream,shah2020anytime,dantam2018incremental}) to situations and domains where experts are available and able to correctly intuit the required abstractions.

This paper shows that the required abstractions can be learned by identifying regions in the environment that are important to solve motion planning problems (loosely similar to landmarks in task planning).~\citet{dan_llp} define such regions as \emph{critical regions}. Critical regions are analogous to \emph{landmarks} in automated planning but unlike landmarks, critical regions do not necessarily have to be reached to achieve the goal. Fig. \ref{fig:demo_reg_1} shows a few candidate critical regions for the environment in Fig. \ref{fig:demo_env_1}.

\begin{figure}
    \centering

    \subfigure[]{\includegraphics[width= 0.3\columnwidth]{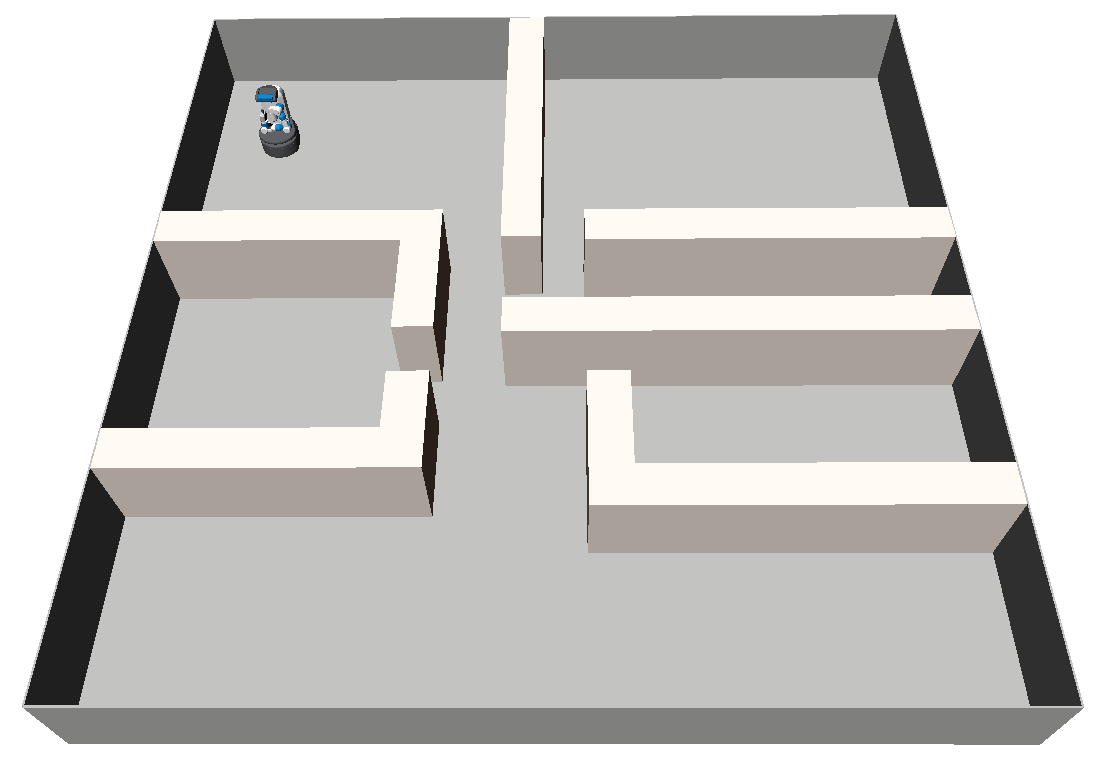}
    \label{fig:demo_env_1}}
    \subfigure[]{\includegraphics[width= 0.3\columnwidth]{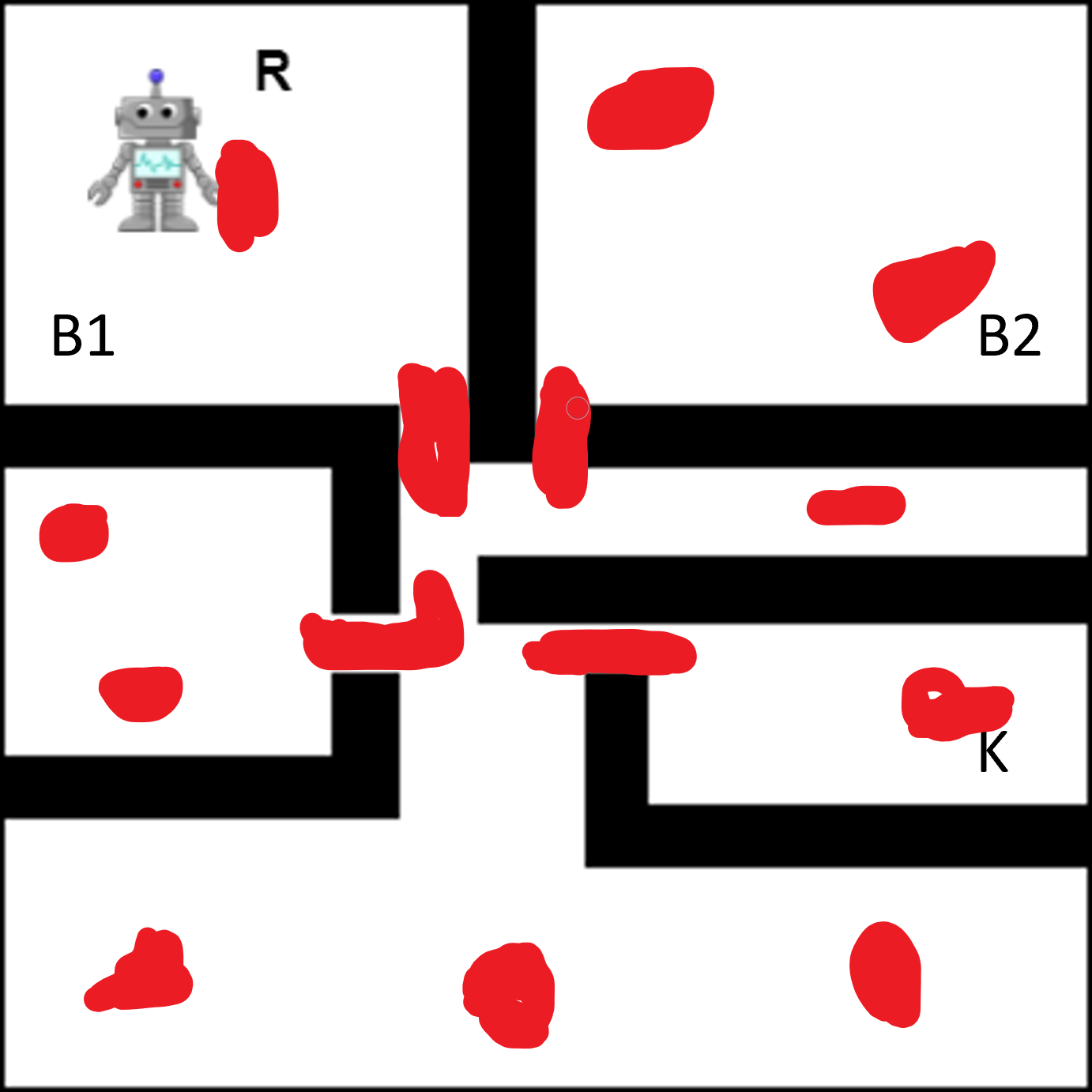}
    \label{fig:demo_reg_1}}
    \subfigure[]{\includegraphics[width= 0.3\columnwidth]{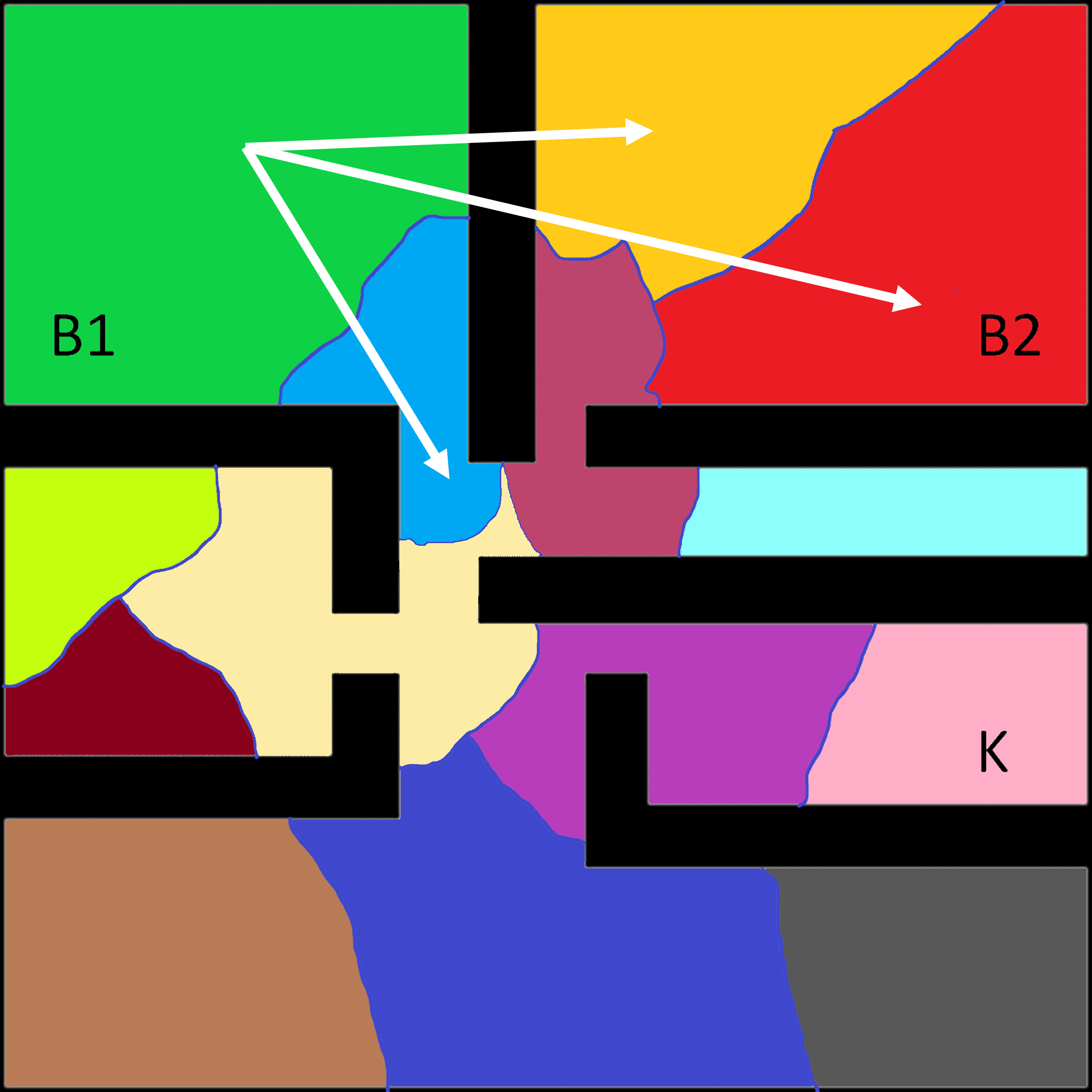}\label{fig:demo_voronoi_1}}

    \caption{(a) An illustrative environment for a motion planning problem. The robot (R) is tasked to reach the kitchen (K). Red regions (b) show predicted critical regions in the environment. Lastly, (c) shows a projection of the computed state abstraction. Each colored cell represents an abstract state. Arrows show examples of abstract actions, defined as transitions between abstract states. } 

    \Description{
       Illustrative figure for our appoach: There are three figures. The first figure shows an example of a household environemt with a fetch robot in the top left corner. The second image shows the example of critical regions highlighted in the image as red blobs. These are the critical regions that our approach aim to automatically identify for unseen environments. The third figure shows an example of state and action abstraction that our approach aims to construct using the critical regions in the second figure.
    }
    
    \label{fig:example1}
\end{figure}

In this paper, we investigate two major questions: $1)$ Can we use deep learning to automatically generate such abstractions for new environments? And $2)$ how can we use them to enable safe and more efficient planning algorithms? The main contributions of this paper are: a formal foundation for hierarchical (state and action) abstractions based on the critical regions (similar to Fig. \ref{fig:demo_voronoi_1}) and a novel algorithm for on-the-fly construction of such hierarchical abstraction using predicted critical regions. Abstractions computed using learning in this manner can be difficult to use. In fact, our preliminary experiments showed that they do not help in standard single-source goal-directed search due to collision with obstacles in the environment. 

An additional contribution of this work is the finding that these abstractions empower effective multi-source, multi-directional search algorithms. In general, such search algorithms can be difficult to use due to the absence of information about states that may lie close to a path to the goal. However, we found that when used with our auto-discovered abstract states, these algorithms significantly outperform existing baselines.

Our formal framework provides a way to generate sound abstractions that satisfy \emph{downward refinement property}~\citep{bacchus1991downward} for holonomic robots along with probabilistic completeness in the general case. Our exhaustive empirical evaluation shows that our approach outperforms state-of-the-art sampling and learning-based motion planners and requires significantly less time. This evaluation includes evaluation on a total of twenty different settings with four different robots, which include holonomic and non-holonomic robots.

The rest of the paper is organized as follows: Sec. \ref{sec:related} discusses some of the existing related approaches, Sec. \ref{sec:background} introduces some concepts required to understand our approach, Sec. \ref{sec:formal} defines the formal framework for our algorithm, Sec. \ref{sec:algo} presents our approach and theoretical results in detail. Finally, Sec. \ref{sec:evaluation} presents our extensive empirical evaluation. 

\section{Related Work}
\label{sec:related}
Much of the prior work on the topic is focused on decomposing a motion planning problem into smaller subproblems to reduce its complexity. Several approaches have been proposed that use state decomposition to reduce the complexity of a motion planning problem. \emph{Vertical cell decomposition}~\citep{chazelle1985approximation} partitions the state space into a collection of vertical cells and computes a roadmap that passes through all of these cells. \citet{kavraki_decomp} propose a hierarchical method that uses  
\emph{wavefront expansion} to compute the decomposition of the state space. While these approaches establish the foundation of decomposition-based motion planning, partitions generated through such approaches are arbitrary and do not provide any guarantees of completeness. \citet{simsek2005} use graph cut over local transition graph to identify interface points  between highly dense regions. They use these interface points to learn options that take the agent from one region to another region. One of the major distinctions between theirs and our approach is that our approach operates with continuous state and actions spaces, but their approach requires discrete actions. Their approach also requires collecting local experience for every new environment while our approach learns the model that identifies critical regions once and uses the same model for every new environment. Additionally, their approach is \emph{dual} to our approach as it aims to identify interface points between regions while our approach aims to directly identify these regions in the environment using a pre-trained network and  does not need to identify interface points.

~\citet{zhang2018learning} use rejection sampling to reject unrelated samples to speed up SBMPs.They use reinforcement learning to learn a policy that decides to accept or reject a new sample to expand the search tree. While their approach reduces the search space to compute the path, it still needs to process samples generated from regions that are irrelevant for the current problem. On the other hand, our hierarchical approach refines abstract plans into low-level motion plans which reduces the number of unnecessary samples. TogglePRM~\citep{denny-wafr-2012} maintains roadmaps for free space and obstacle space in the configuration space to estimate the narrow passages and sample points from these narrow passages. This approach works well for environments with $\alpha$-$\epsilon$-separable passages, even though it does not compute high-level abstractions.

Multiple approaches have used statistical learning to boost motion planning. ~\citet{wang2021survey} present a comprehensive survey of methods that utilize a variety of learning methods to improve the efficiency of SBMPs. Multiple approaches discussed by ~\citet{wang2021survey} use end-to-end deep learning to learn low-level reactive policies. End-to-end approaches are attractive given if they succeed, they can compute solutions much faster than traditional approaches, but it is  not exactly clear under which conditions these algorithms would succeed. Formally, these end-to-end deep learning-based approaches lack the guarantees about completeness and soundness that our approach provides. \citet{wang2021survey} also discuss approaches that use learning to aid sampling-based motion planning. We discuss a few of these approaches that are relevant to this work.  \citet{kurutach2018learning} uses \emph{InfoGAN}~\citep{chen2016infogan} to learn state-space partitioning for simple \emph{SE$^2$} robots. While their empirical evaluation shows promising results, similarly to previous decomposition-based approaches, they do not provide any proof of completeness. It is also not clear how their approach would scale to configuration spaces that had more than two dimensions. On the other hand, our approach provides formal guarantees of completeness and soundness (for holonomic robots) and scales to high-dimensional spaces.

\citet{ichter2018learning} and \citet{kumar2019} use a conditional variational autoencoder (CVAE) \citep{Sohn2015LearningSO} to learn  sampling distributions for the motion planning problems. \citet{ichter2020} use \emph{betweenness centrality} to learn criticality score for low-level configurations. They uniformly sample a set of configurations from the environment and use configurations with higher criticality from this set to generate a  roadmap. Their results show significant improvement over vanilla PRM but it is unclear how their approach would perform if the environment had regions that are important to compute motion plans yet difficult to sample under uniform sampling. On the other hand, our approach would identify such important regions to overcome these challenges. While these approaches~\citep{ichter2018learning,kumar2019,ichter2020}
focus on biasing the sampling distribution towards narrow areas in the environment, our approach aims to build more general high-level abstractions for the configuration space.

 ~\citet{dan_llp} use an image-based approach to learn and infer the sampling distribution using demonstrations. They use top-view images of the environment with \emph{critical regions} highlighted in the image to learn to identify critical regions. While they develop a method for predicting critical regions and using them with a low-level motion planner, they do not use these critical regions for learning abstractions and performing hierarchical planning. Our empirical results (Sec. \ref{sec:evaluation}) show that our hierarchical approach is much more effective than their non-hierarchical approach and yields significantly better performance. Additionally, their approach is also restricted to navigational problems and does not scale to configuration spaces with more than two degrees of freedom (DOFs). 

 Deep learning has also been used for learning heuristics for high-level symbolic planning. \citet{shen2020learning} use hypergraph networks for learning heuristics for symbolic planning in the form of delete-relaxation representation of the actual planning problems. \citet{karia2021learning} learn generalizable heuristics for high-level planning without explicit action representations in symbolic logic. In contrast, we focus on learning critical regions and creating high-level abstractions along with algorithms that work with these learned high-level abstractions.
 
 ~\citet{liu2020icra} use semantic information to bias the sampling distribution for navigational problems in partially known environments. Compared to it, our approach is not navigational problems and does not require semantic information explicitly but aims to learn such a notion in the form of critical regions. \emph{SPARK} and \emph{FLAME}~\citep{chamzas2020learning} use state decomposition to store past experience and use it when queried for similar state decompositions. While their approach efficiently uses the experience from previous iterations, it requires carefully crafted state decompositions in order to cover a large number of scenarios, whereas our approach generates state abstraction automatically using the predicted critical regions.

\section{Background}
\label{sec:background}
\paragraph{\textbf{Motion Planning Problem}}Let $\X = \Xf \cup \Xo $ be the configuration space of a given robot~\citep{Lav06}. Here $\Xf$ represents the set of configurations where the robot is not in collision with any obstacle and $\Xo$ represents configurations in collision with an obstacle. Let $x_i \in \Xf$ and $x_g \in \Xf$ be the initial and goal configurations of the robot. A motion planning problem is defined as follows:
\begin{definition}
A motion planning problem $\mathcal{M}$ is defined as a $4$-tuple $\langle \X, u, x_i, x_g \rangle$. where $\X = \Xf \cup \Xo$ is the configuration space and $x_i, x_g \in \Xf$ are the robot's initial and goal configurations. $u: \X \rightarrow \{0,1\}$ is a collision function that determines collisions for configurations: $u(x) = 1$ iff $x \in \Xo$.
\end{definition}

A solution to a motion planning problem is a collision-free trajectory $\tau: [0,1] \rightarrow \X$ such that $\tau(0) = x_i$ and $\tau(1) = x_g$. We abuse the notation to define membership in a trajectory as follows: For a configuration $x \in \X$, $x \in \tau$ iff there a exists a $t \in [0,1]$ such that $\tau(t) = x$. A trajectory is collision free iff $\forall x \in \tau, u(x) = 0$. 

\paragraph{\textbf{Connectivity}} A pair of low-level configurations $x_i, x_j \in \X$ is said to be connected iff there exists a collision-free motion plan between $x_i$ and $x_j$. We represent this using a connectivity function $C: \Xf \times \Xf \rightarrow \{0,1\} $: $C(x_i, x_j) = 1$ iff $x_i$ and $x_j$ are connected. Intuitively, the connectivity function $C$ represents Euclidean connectivity in $\Xf$. This is equivalent to path connectivity in configuration space for holonomic robots as each degree of freedom can be controlled independently. However, this may not be the case for non-holonomic robots as some of the motion plans may not be realizable due to their motion constraints.

We use this to define strong connectivity for the set of low-level configurations as follows:
\begin{definition}
    Let $\bar{\X} \subseteq \Xf $ be a set of configurations. $\bar{\X}$ is a strongly connected set iff for every pair of configurations $(x_i,x_j) \in \Xf \times \Xf$, $C(x_i,x_j) = 1$ and there exists a trajectory $\tau_{ij}$ such that $\tau_{ij}(0) = x_i$, $\tau_{ij}(1) = x_j$, and $\forall\,t \in [0,1], \tau_{ij}(t) \in \bar{\X}$,     
\end{definition}

\paragraph{\textbf{Critical Regions}} Our approach uses \emph{critical regions (CRs)} to generate abstractions. Intuitively, critical regions are regions in the configuration space that have a high density of valid motion plans passing through them for the given class of motion planning problems. \citet{dan_llp} define critical regions as follows:
\begin{definition}
    \label{def:critical}
    Given a robot $R$, a configuration space  $\X$, and a class of motion planning problems $M$, the measure of criticality of a Lebesgue-measurable open set $r \subseteq \X$
     is defined as $\lim_{s_n \to ^{+}r} \frac{f(r)}{v(s_n)} $, where $f(r)$ is the fraction of observed motion plans solving tasks from $M$ that pass through $s_n$, $v(s_n)$ is the measure of $s_n$ under a reference density (usually uniform), and $\to^{+}$ denotes the limit from above along any sequence $\{s_n\}$ of sets containing $r$ ($r \subseteq s_n$, $\forall\,n$). 
 \end{definition}

\paragraph{\textbf{Beam Search}}

\citet{lowerre1976harpy} introduced beam search as an optimization over breadth-first search (BFS) that explores the state space by expanding only a subset of nodes to compute a path from one node to another node in the graph. Beam search prunes the fringe to reduce the size of \emph{OPEN} set. We include pseudocode for the beam search in Appendix~A.

\section{Formal Framework}
\label{sec:formal}

We begin describing our formal framework with an example. Fig. \ref{fig:demo_reg_1} shows a set of critical regions for a given environment. Ideally, we would like to predict these critical regions and generate a state and action abstraction similar to the one shown in Fig. \ref{fig:demo_voronoi_1}. The state abstraction shown in Fig. \ref{fig:demo_voronoi_1}, similar to a Voronoi diagram, generates cells around each critical region such that the distance from each point in a cell to its corresponding critical regions is less than that from every other critical region. We call this structure a \emph{region-based Voronoi diagram (RBVD)}. Each cell in this region-based Voronoi diagram is considered an abstract state and transitions between these Voronoi cells (abstract states) define abstract actions. 

Let $\rho$ be the set of critical regions for the given configuration space $\X$. First we introduce distance metrics  $d^{c}$ and $d^r$. Here, $d^c$ defines distance between a low-level configuration $x \in \X$ and a critical region $r \in \rho$ such that $d^{c}(x,r) = \min\nolimits_{x_i \in r} d(x,x_i)$ and $d^{r}$ defines the distance between two critical regions $r_1, r_2$ such that the distance $d^{r} = \min\nolimits_{x_i \in r_1, x_j\in r_2} d(x_i, x_j)$ where $d$ is the Euclidean distance. Now, we define region-based Voronoi diagram as follows: 

\begin{definition} \label{def:RBVD}
Let $\rho = \{r_1,...,r_k\}$ be a set of critical regions for the configuration space $\X$. A region-based Voronoi diagram (RBVD) is a partition $\Psi(\rho, \X) = \{\psi_1,...,\psi_m\}$ of  $\X$ such that for every $\psi_i \in \Psi$ there exists a critical regions $r$ such that forall $x \in \psi_i$ and forall $r_j \neq r$, $d^{c}(x,r) \le d^{c}(x,r_j)$ and each $\psi_i$ is strongly connected.
\end{definition}

\paragraph{\textbf{State Abstraction}} 
We define abstract states as the Voronoi cells of an RBVD. Given an RBVD $\Psi$, labelling function $\ell: \Psi \rightarrow \Sm$ maps each cell in the RBVD to a unique abstract state $s \in \Sm$ where $|\Sm| = |\Psi|$. We use this to define the  state abstraction function $\alpha$ as follows:
\begin{definition}
    Let $R$ be the robot and $\X = \Xf \cup \Xo$ be the configuration space of the robot $R$ with set of critical regions $\rho$. Let $\Psi(\rho,\X) = \{\psi_1,..\psi_k\}$ be an RBVD for the robot $R$, configuration space $\X$, and the set of critical regions $\rho$ and let $\mathcal{S} = \{s_1,..,s_k\}$ be a set of high-level, abstract states.  We define abstraction function $\alpha: \Xf \rightarrow \mathcal{S}$ such that $\alpha(x) = s$ where $x \in \psi$ and $\ell(\psi) = s$.
\end{definition}

We extend this notation to define membership in abstract states as follows: Given a configuration space $\X = \Xf \cup \Xo$ and its set of abstract states $\mathcal{S}$ as defined above, a configuration $x \in \Xf$ is said to be a member of an abstract state $s \in \mathcal{S}$ (denoted  $x \in s$) iff $\alpha(x) = s$. We also extend the notion of strong connectivity to abstract states as follows: An abstract state $s \in \Sm$ is strongly connected iff $\ell^{-1}(s)$ is strongly connected. We now define adjacency for Voronoi cells in a region-based Voronoi diagram as follows. Recall that $C$ denotes Euclidean connectivity for configurations.

\begin{definition} \label{def:adj_RBVD}
Let $\psi_i, \psi_j$ be Voronoi cells of an RBVD $\Psi$. Voronoi cells $\psi_i$ and $\psi_j$ are adjacent iff there exist configurations $x_i, x_j$ such that $x_i \in \psi_i$, $x_j \in \psi_j$, $C(x_i,x_j) = 1$, and there exists a trajectory $\tau$ between $x_i$ and $x_j$ such that $\forall t\in[0,1], \tau(t) \in \psi_i$ or $\tau(t) \in \psi_j$.
\end{definition}

We extend the above definition to define the neighborhood for an abstract state. Two abstract states $s_i, s_j \in \Sm$ are neighbors iff $\ell^{-1}(s_i)$ and $\ell^{-1}(s_j)$ are adjacent. 

We define abstract actions as transitions between abstract states. Let $\Sm$ be the set of abstract states. We define the set of abstract actions $\mathcal{A}$ using $\Sm$ such that $\mathcal{A} = \{ a_{ij} |\forall~ (s_i, s_j) \in \Sm \times \Sm \}$. 

We now use this formulation of RBVD and state abstraction to prove the soundness of the generated abstractions . 
\begin{theorem}
\label{thm:complete}
Let $\X = \Xf \cup \Xo$ be a configuration space and $\rho$ be a set of critical regions for $\X$. Let $\Psi$ be an RBVD for the critical regions $\rho$ and the configuration space $\X$ and let $\Sm$ be the set of abstract states corresponding to $\Psi$ with a mapping function $\ell$. Let $x_0$ and $x_g$ be the initial and goal configurations of a holonomic robot $R$. If every state $s \in \Sm$ is strongly connected and there exists a sequence of abstract states $P = \langle s_{\psi_0}, ..., s_{\psi_g} \rangle$ such that $x_0 \in s_{\psi_0}$, $x_g \in s_{\psi_g}$, and all consecutive states $s_{\psi_i}, s_{\psi_{i+1}} \in P$ are neighbors, then there exists a motion plan for $R$ that reaches $x_g$ from $x_0$ with a trajectory $\tau$ such that $\tau(0) = x_0$, $\tau(1) = x_g$, and $\forall x_i \in \tau, x_i \in s_{\psi_k}$ such that $s_{\psi_k} \in P$.

\end{theorem}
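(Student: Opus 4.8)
The plan is to prove the statement by induction on the length of the abstract plan $P$, incrementally building a collision‑free trajectory out of $x_0$ one cell at a time, and exploiting the fact that for a holonomic robot any continuous curve lying in $\Xf$ is a realizable collision‑free motion plan. The invariant I would carry is: for every prefix $\langle s_{\psi_0},\dots,s_{\psi_k}\rangle$ of $P$ there is a collision‑free trajectory from $x_0$ to some configuration $y_k\in\ell^{-1}(s_{\psi_k})$ whose image is contained in $\bigcup_{i=0}^{k}\ell^{-1}(s_{\psi_i})$. The base case $k=0$ is immediate: take the constant trajectory at $y_0=x_0$, which is legitimate since $x_0\in s_{\psi_0}$ by hypothesis and $\ell^{-1}(s_{\psi_0})\subseteq\Xf$ because that cell is strongly connected.

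For the inductive step, assume the invariant at $k$ with trajectory $\tau_k$ ending at $y_k\in\ell^{-1}(s_{\psi_k})$. Since $s_{\psi_k}$ and $s_{\psi_{k+1}}$ are neighbors, their cells are adjacent, so Definition~\ref{def:adj_RBVD} supplies configurations $x_a\in\ell^{-1}(s_{\psi_k})$ and $x_b\in\ell^{-1}(s_{\psi_{k+1}})$ and a trajectory $\sigma$ from $x_a$ to $x_b$ with image in $\ell^{-1}(s_{\psi_k})\cup\ell^{-1}(s_{\psi_{k+1}})$. Because $s_{\psi_k}$ is strongly connected, $y_k$ and $x_a$ are joined by a trajectory $\eta$ whose image lies entirely in $\ell^{-1}(s_{\psi_k})$. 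Concatenating $\tau_k$, $\eta$, and $\sigma$ and reparametrizing to $[0,1]$ gives a trajectory from $x_0$ to $y_{k+1}:=x_b\in\ell^{-1}(s_{\psi_{k+1}})$ with image in $\bigcup_{i=0}^{k+1}\ell^{-1}(s_{\psi_i})$, establishing the invariant at $k+1$. Applying it at $k=g$ yields a trajectory to some $y_g\in\ell^{-1}(s_{\psi_g})$; one final concatenation with a trajectory inside $\ell^{-1}(s_{\psi_g})$ joining $y_g$ to $x_g$ (which exists since $x_g\in s_{\psi_g}$ and $s_{\psi_g}$ is strongly connected) produces $\tau$ with $\tau(0)=x_0$, $\tau(1)=x_g$, and image contained in $\bigcup_{s\in P}\ell^{-1}(s)$. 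Hence every configuration on $\tau$ lies in some $\ell^{-1}(s_{\psi_k})$ with $s_{\psi_k}\in P$, i.e. $x_i\in s_{\psi_k}$; and since every such cell is a subset of $\Xf$, $\tau$ is collision‑free.

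Two points need care. First, each concatenation must be a bona fide motion plan for the robot: this is exactly where holonomicity is used, since when each degree of freedom is independently controllable any continuous curve in $\Xf$ — in particular a glued concatenation of such curves — is realizable, so $C$ coincides with path‑connectivity and the trajectories promised by strong connectivity and adjacency may be spliced without violating motion constraints. Second, the definition of a strongly connected set should be read with its quantifier ranging over pairs drawn from that set, under which reading the in‑cell trajectories $\eta$ (and the final trajectory to $x_g$) exist by assumption. I expect the only real obstacle to be this splicing/reparametrization bookkeeping together with making the appeal to holonomicity explicit; the combinatorial skeleton is a routine induction on $|P|$.
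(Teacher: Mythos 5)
Your proposal is correct and follows essentially the same route as the paper's proof: use Def.~\ref{def:adj_RBVD} to obtain crossing configurations between consecutive neighboring cells, strong connectivity to connect within each cell, and holonomicity to make the spliced trajectories realizable. The only difference is that you make the chaining along $P$ explicit via induction and reparametrization, which the paper leaves implicit after establishing the pairwise step.
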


\begin{proof} 
For two consecutive abstract states $s_i, s_{i+1} \in P$, let $\psi_i, \psi_{i+1}$ $\in \Psi$ be Voronoi cells such that $\ell^{-1}(s_i) = \psi_i$ and $\ell^{-1}(s_{i+1}) = \psi_{i+1}$. If $s_i$ and $s_{i+1}$ are neighbors, then according to Def. \ref{def:adj_RBVD} there exists a pair of low-level configurations $x_i, x_{i+1} \in \Xf$ such that there exists a collision free trajectory between $x_i$ to $x_{i+1}$. Def. \ref{def:RBVD} defines every Voronoi cell as a strongly connected set. Thus, for every low-level configuration $x_j \in s_i$, there exists a collision-free trajectory between $x_j$ and $x_i$ and for every low-level configuration $x_k \in s_{i+1}$, there exists a collision-free trajectory between $x_k$ and $x_{i+1}$. For a holonomic robot $R$ these trajectories should be realizable as all degrees of freedom of $R$ can be controlled independently. This implies that there exists a motion plan for $R$ between each pair of configurations in $s_{\psi_{i}}$ and $s_{\psi_{i+1}}$. 

\end{proof}

Theorem \ref{thm:complete} proves that the computed abstractions would be sound as well as satisfy downward refinement property for holonomic robots. The proof does not hold for non-holonomic robots as the low-level trajectories may not be realizable given their motion constraints. However, the algorithm developed below is probabilistically complete for all robots and performed well for non-holonomic robots in our empirical evaluation.

\section{Learning Abstractions and Planning}
\label{sec:algo}

Our approach computes hierarchical state and action abstractions using predicted critical regions and uses them efficiently for hierarchical planning. Now, we first discuss our approach for generating and using hierarchical abstractions using critical regions (Sec.~\ref{sub:abstractions}) and then we discuss how we learn these critical regions (Sec.~\ref{sub:learn}).

\begin{algorithm}[t!]
    \caption{Hierarchical Abstraction-guided Robot Planner (HARP)}
    \label{alg:harp}
    \KwIn{Configuration space $\X$, a region predictor $\Phi$, an initial configuration $x_0 \in \X$, goal configuration $x_g \in \X$, a custom heuristic $h$, low-level sampling-based motion planner \emph{MP}}
    \KwOut{A motion plan $\tau$}
    
    $\rho$ $\gets$ predict\_critical\_regions($\Phi$, $\X$, $x_0$, $x_g$) \\
    $\Sm, \mathcal{A} \gets$ generate\_state\_action\_abstractions($\rho$, $\X$ )\\ 
    $s_0, s_g \gets$ get\_HL\_state($\Sm$, $\rho$, $x_0$), get\_HL\_state($\Sm$, $\rho$, $x_g$) \\
    $\mathcal{P}$ $\gets$ multi-source\_bi-directional\_beam\_search($\Sm$, $\mathcal{A}$, $s_0$, $s_g$) \\
    $\tau \gets $ refine\_path($\mathcal{P}$, \emph{MP}) \\ 
    $h \gets$ update\_heuristic($\tau$, $\Sm$) \\ 
    return $\tau$
    \end{algorithm}

    \Description{ 
        Algorithm 1 for HARP. This algorithm summerizes our approach for constructing abstractions from automatically identified critical regions and using them to perform hierarchical planning using the multi-source bi-directional beam search and a probabilistically complete low-level motion planner. 
    }
\subsection{Generating and Using  Abstractions}
\label{sub:abstractions}

In this section, we describe our approach -{}- \emph{\textbf{H}ierarchical \textbf{A}bstraction-guided \textbf{R}obot \textbf{P}lanner (HARP)} -{}- for generating abstract states and actions and using them to efficiently perform hierarchical planning. A na\'ive approach would be to generate a complete RBVD and then extract abstract states and actions from it. This would require iterating over all configurations in the configuration space and computing a large number of motion plans to identify executable abstract actions. This is expensive (and practically infeasible) for continuous low-level configuration spaces. Instead, we use the RBVD as an implicit concept. We generate abstractions on-the-fly by computing membership of low-level configurations in abstract states only when needed. 

Vanilla high-level planning using the set of all abstract actions $\mathcal{A}$ would be inefficient as it may yield plans for which low-level refinement may not exist as we do not know the applicability of these abstract actions at low-level. To overcome this challenge, we develop a hierarchical multi-source bi-directional planning algorithm that performs high-level planning from multiple abstract states. Generally, a multi-source approach would not work for robot planning because it is not clear what the intermediate states are. In this paper, we use critical regions as abstract intermediate states and utilize a multi-source search. This utilizes learned information better than single source and single direction beam search. Our high-level planner generates a set of candidate high-level plans from the abstract initial state to the abstract goal state using a custom heuristic (which is continually updated). These paths are then simultaneously refined by a low-level planner to compute a trajectory from the initial low-level configuration to the goal configuration while updating the heuristic function.

Algorithm \ref{alg:harp} describes our approach for generating and using hierarchical abstractions. Given the configuration space $\X$ and initial and goal configurations ($x_0$ and $x_g$) of the robot $R$, HARP uses a learned DNN $\Phi$ to generate a set of critical regions $\rho$ (Sec.~\ref{sub:learn} discusses how we learn this model $\Phi$) (line $1$). The remainder of Alg.~\ref{alg:harp} can be broken down into three important steps: $1)$ computing a set of candidate high-level plans, $2)$ refining candidate high-level plans into a low-level trajectory, and $3)$ updating the heuristic for abstract states. We now explain each of these steps in detail.

\begin{algorithm}[t!]
    \caption{Multi-source Bi-directional beam search}
    \label{alg:msbi_beam_search}
    \KwIn{Set of states $\Sm$, set of actions $\mathcal{A}$, initial state $s_0$, goal state $s_g$, distance function $h'$, beam width $w$, number of high-level plans $N$}
    \KwOut{A set of $N$ paths from $s_0$ to $s_g$}
    
    fringe = PriorityQueue() \\ 
    fringe.add($s_0$), fringe.add($s_g$) \\
    $\bar{S} \gets$ sample\_states($\Sm$) \\ 
    solutions = Set() \\ 
    \ForEach{$s \in \bar{S}$}{
        fringe.add($0$,$(s,\text{None},\text{Set()})$) \\
    }
    \While{$N > 0$ and fringe is not empty}{
        working\_fringe $\gets$ select top $w$ nodes from the  fringe\\
        empty\_fringe(fringe) \\ 
        \While{working\_fringe is not empty}{
            current, path, visited $\gets$ working\_fringe.pop() \\
            \If{current = $s_g$}{
                add path to solutions \\ 
                $N \gets N - 1$ \\ 
            }
            \Else{
                path.add(current) \\ 
                visited.add(current) \\ 
                \ForEach{ node $\in $ current.successors }{
                    \If{node $\notin$ visited }{
                        $p \gets h'(\text{current},\text{node}) + min\{h'(\text{node},s_0), h'(\text{node},s_g) \}  $ \\ 
                        fringe.add($p$, (current, path, visited))                 }
                }
                    
            }
        }
    }
    return solutions
    \end{algorithm}

\subsubsection{\textbf{Computing High-Level Plans}} To compute high-level plans that reach the goal configuration $x_g$ from the initial configuration $x_0$, first we determine abstract initial and goal states $s_0$ and $s_g$ corresponding to the initial and goal configurations $x_0$ and $x_g$ (line $3$). To do this efficiently, we store sampled points from each critical region in a k-d-tree and query it to determine the abstract state for the given low-level configurations without explicitly constructing the complete RBVD. This allows us to dynamically and efficiently determine high-level states for low-level configurations. 

Once we determine initial and goal states $s_0$ and $s_g$, we use our high-level planner to compute a set of candidate high-level plans going from $s_0$ to $s_g$ (line $4$). To compute these candidate high-level plans, we develop a multi-source bi-directional variant of beam search~\citep{lowerre1976harpy} that yields multiple high-level candidate plans. We call this \emph{multi-source bi-directional beam search}. Alg.~\ref{alg:msbi_beam_search} presents the psudocode for mulit-source bi-directional beam search.

Intuitively, this algorithm works as follows: we use a priority queue to maintain a fringe to keep track of the current state of the search. We initialize this fringe with multiple randomly sampled abstract states (line $5$) to allow the beam search to start from multiple sources. We select and expand these nodes in a specific order (line $11$) to compute high-level plans that reach from the initial state to the goal state. 

Formally, the nodes in the fringe are expanded as follows: Let $n$ be a node in the fringe. We select the node to expand using $f(n) = g(n) + h(n)$, where $g(n)$ is  the cost of the path heading to $n$ (unit cost for each action) and $h(n)$ is a custom heuristic that is defined as follows. Let $m$ be the parent of  $n$ and let $s_n$ and $s_m$ be the abstract states corresponding to the nodes $n$ and $m$. The heuristic $h(n)$ is computed as: 
$$h(n) = h'(s_m,s_n) + min\{h'(s_n,s_i), h'(s_n,s_g) \}  $$ 
Here, $h'(s_1,s_2) = \epsilon_{12} d^{r}(r_1,r_2)$ defines the estimated distance between abstract high-level states $s_1$ and $s_2$ with corresponding critical regions $r_1$ and $r_2$ respectively. $s_i$ is the initial abstract state and $s_g$ is the goal abstract state.

$\epsilon_{ij} \in (0,1]$ is a constant that accounts for imprecise abstract actions. Alg.~\ref{alg:harp} dynamically changes it to update the heuristic function (HARP line $6$) and making it more accurate. Initially, $\epsilon_{ij}$ is set to $1$ for all $i$ and $j$. Once a low-level trajectory $\tau$ is computed (explained later), we compute the abstraction $\bar{\tau}$ of this trajectory. For each consecutive pair of abstract states $\langle s_i, s_{j} \rangle$ in $\bar{\tau}$, we decrease the value of $\epsilon_{ij}$ by $\epsilon_{ij}/2$ . This allows HARP to use experience from previously computed trajectories to prioritize abstract actions that have low-level refinements to compute accurate high-level plans.

After an abstract state $s_j$ is selected from the fringe (Alg.~\ref{alg:msbi_beam_search}, line $8$), its successors are created by applying abstract actions on it and adding these successors to the fringe (Alg.~\ref{alg:msbi_beam_search} lines $18$-$21$). This process continues until $N$ high-level plans are found.

Once we generate a set of candidate high-level plans from multi-source bi-directional beam search, we use a low-level planner to refine these plans into a low-level collision-free trajectory from initial low-level configuration $x_0$ to goal configuration $x_g$ (line $5$). 

\subsubsection{\textbf{Refining High-level Plans}}

While any \emph{probabilistically complete} motion planner can be used to refine the computed high-level plans into a low-level trajectory between given two configurations, we use \emph{Learn and Link Planner (LLP)}~\citep{dan_llp} as a low-level planner in HARP (\emph{MP} in Alg.~\ref{alg:harp}) as it allows us to easily use the abstract states and their critical regions to efficiently compute motion plans. LLP is a sampling-based motion planner that initializes exploration trees rooted at $N$ samples from the configuration space and extends these exploration trees until they connect and form a single tree. Once a single tree is formed, the planner uses Dijkstra's Algorithm~\citep{dijkstra1959note} to compute a path from the initial state to the goal state.

 In order to use LLP to refine a set of candidate high-level plans simultaneously, we first select a subset of critical regions $\bar{\rho} \subseteq \rho$ that includes all critical regions corresponding to all high-level states in candidate plans. We use this subset of critical regions $\bar{\rho}$ to provide initial samples to initialize exploration trees of LLP. In this work, we generate $M$ samples from the set of critical regions $\bar{\rho}$ and generate the rest of the $N-M$ samples using uniform random sampling. Similarly, to expand these exploration trees, we generate a fixed number of samples from the set critical regions $\rho$ and then continue with uniform sampling.

We use these characteristics of our algorithm to show that our approach is \emph{probabilistic complete}.

\begin{theorem}
    \label{thm:prob}
    If the low-level motion planner (\emph{MP} in Alg. \ref{alg:harp}) is probabilistically complete, then HARP is probabilistically complete.
\end{theorem}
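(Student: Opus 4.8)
The plan is to argue that HARP inherits probabilistic completeness from its low-level motion planner \emph{MP} because the high-level machinery in Alg.~\ref{alg:harp} only \emph{biases} the sampling used by \emph{MP} and never removes any configuration from eligibility. The key observation is that in the refinement step, \emph{MP} is seeded with $M$ samples drawn from the critical-region subset $\bar\rho$ and the remaining $N-M$ samples by uniform random sampling over $\X$, and during tree expansion a fixed number of samples come from $\rho$ while the rest are again uniform. Thus the sampling distribution actually used by \emph{MP} is a mixture whose uniform component has a strictly positive mixing weight bounded away from zero, independent of the high-level plans produced. I would formalize this as a lemma: if a sampling-based planner is probabilistically complete under a sampling distribution $\mu$, then it remains probabilistically complete under any mixture $\lambda \nu + (1-\lambda)\mu$ with $\lambda \in [0,1)$ fixed, since the expected number of $\mu$-distributed samples among the first $n$ still grows linearly in $n$, so the success probability still tends to $1$ as $n \to \infty$.

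First I would fix an arbitrary motion planning instance $\langle \X, u, x_0, x_g\rangle$ on which \emph{MP}, run with its native (uniform) sampling, succeeds with probability approaching $1$ as the sample budget grows. Second, I would note that every other component of HARP terminates: \texttt{predict\_critical\_regions} is a single DNN forward pass; \texttt{generate\_state\_action\_abstractions} and \texttt{get\_HL\_state} are finite k-d-tree constructions and queries; and the multi-source bi-directional beam search of Alg.~\ref{alg:msbi_beam_search} is a bounded-width search over a finite abstract state set $\Sm$, so it returns (possibly after exhausting the fringe) in finite time. Consequently HARP always reaches line~5 and calls \emph{MP} on the \emph{same} concrete instance, merely with a modified sampler. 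Third, I would invoke the mixture lemma: since at least a constant fraction of \emph{MP}'s samples are uniform, \emph{MP}'s trees still cover $\Xf$ densely in the limit, and connectivity of $x_0$ to $x_g$ through $\Xf$ is detected with probability $\to 1$. Therefore HARP solves the instance whenever one is solvable, i.e., HARP is probabilistically complete.

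The main obstacle is making precise the claim that the high-level layer cannot \emph{trap} \emph{MP} into an unsolvable restriction of the search space. One has to be careful that the subset $\bar\rho$ of critical regions, and hence the biased seed samples, is chosen by the beam search, which might fail to return any high-level plan if $s_0$ and $s_g$ are not connected in the abstract graph; I would handle this by observing that even when $\mathcal{P}$ is empty (or contains only unrefinable plans), the construction still falls back to $N$ (resp.\ $N-M$) uniform samples and a nonzero uniform expansion component, so \emph{MP} is run as an ordinary probabilistically complete planner with at worst a different constant. A secondary subtlety is that \emph{MP} (here \emph{LLP}) is itself a multi-tree planner, so ``probabilistically complete'' must be read in its native sense (the sample budget $N$ and the per-iteration expansion budget both grow without bound); under that reading the mixture argument applies verbatim. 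I would also remark, as the paper already notes after Theorem~\ref{thm:complete}, that this result does not require the downward refinement property and hence holds for non-holonomic robots as well, since it rests only on the low-level planner's guarantees and not on soundness of the abstraction.
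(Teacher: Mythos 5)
Your argument is essentially the paper's own proof: the paper's (sketch) proof rests exactly on the observation that seeding and biasing \emph{MP} with critical-region samples does not reduce the support of its sampling distribution, so the low-level planner's probabilistic completeness is inherited. Your version simply fleshes this out (mixture lemma, termination of the high-level components, the empty-plan fallback) but follows the same route, and it is correct.
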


\begin{proof}
    (Sketch) While refining high-level plans to a low-level motion plan (line $5$ in Alg. \ref{alg:harp}), HARP uses a fixed number of samples from the critical regions along the high-level plans to initialize a low-level motion planner. This does not reduce the set of support (regions with a non-zero probability of being sampled) of the sampling distribution being used by the motion planner.
\end{proof}

\renewcommand{\thesubfigure}{(\Alph{subfigure})} 
\begin{figure}[t]
    \Alph{subfigure}
            \centering
              \subfigure[]{\includegraphics[height=0.75in]{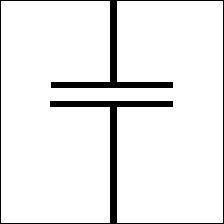}}      
                \subfigure[]{\includegraphics[height=0.75in]{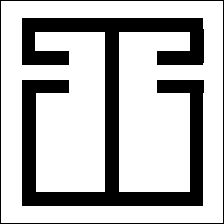}} 
                \subfigure[]{\includegraphics[height=0.75in]{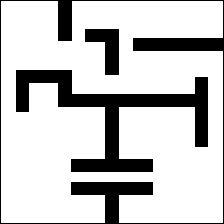}}      
                \subfigure[]{\includegraphics[height=0.75in]{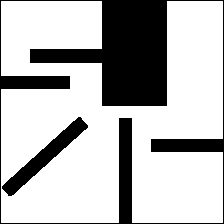}} \\ 
              \subfigure[]{\includegraphics[height=1in]{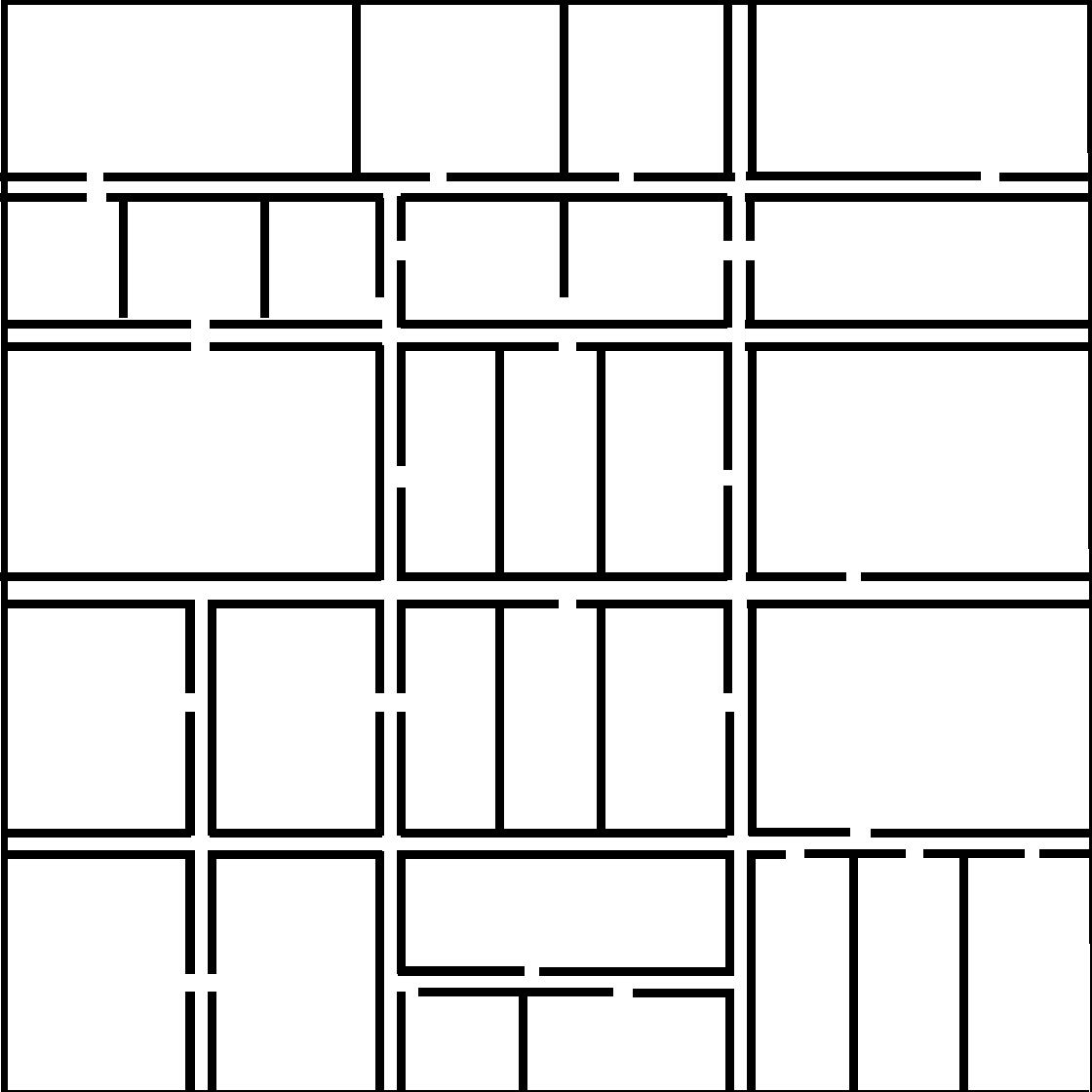}}
              \subfigure[]{\includegraphics[height=1in]{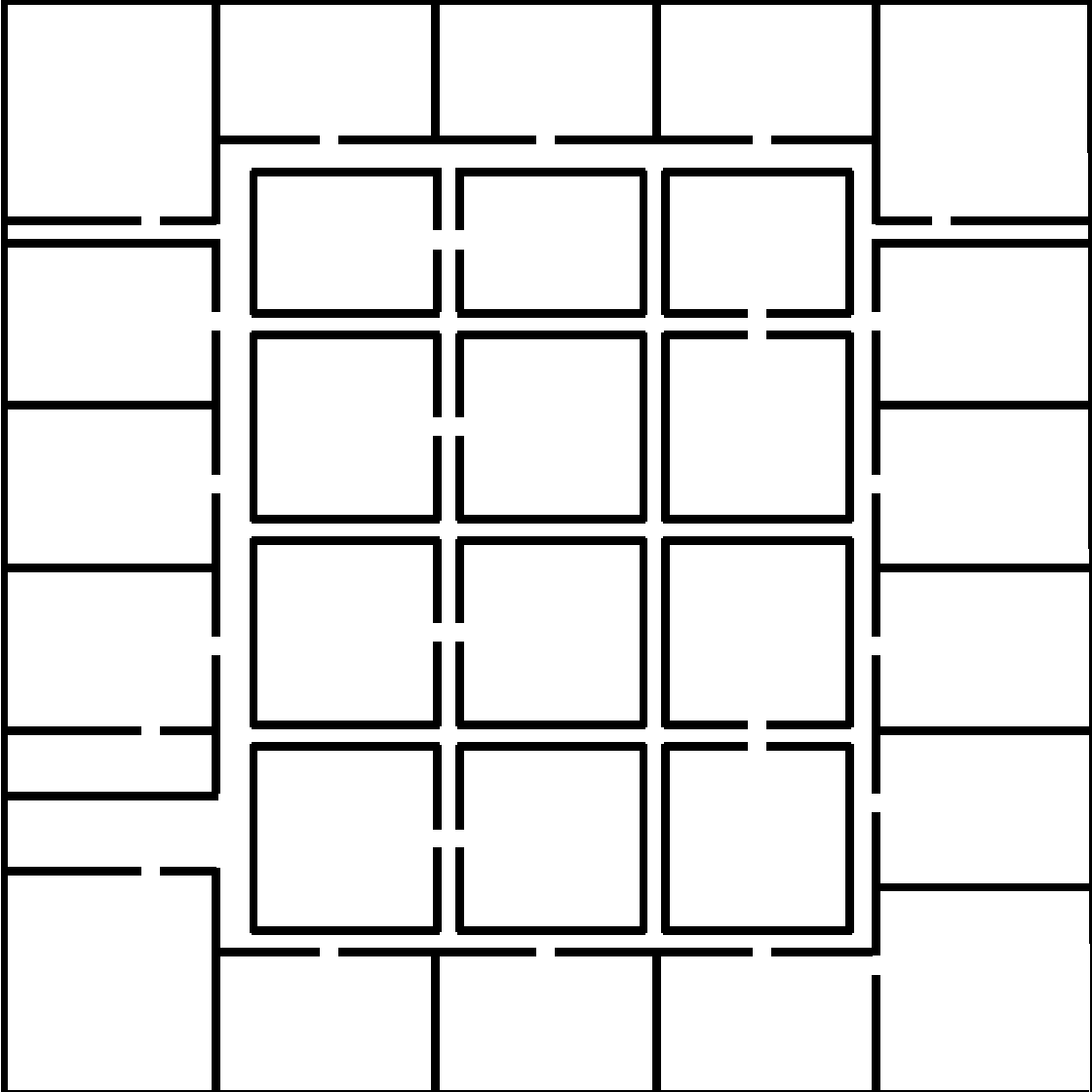}} 
              \subfigure[]{\includegraphics[height=1in]{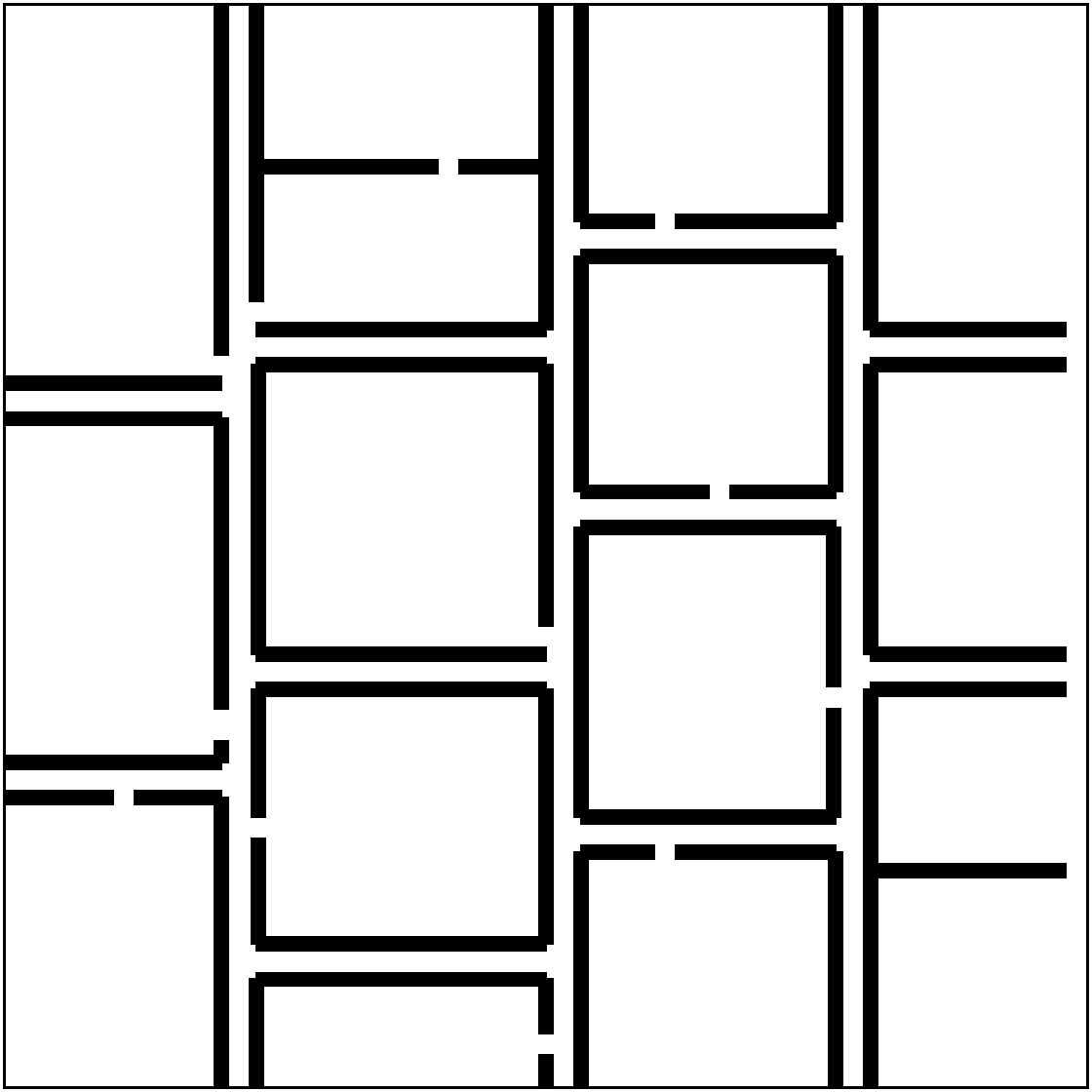}} \\ 
              \setcounter{subfigure}{8}
              \subfigure[]{\includegraphics[height=1in]{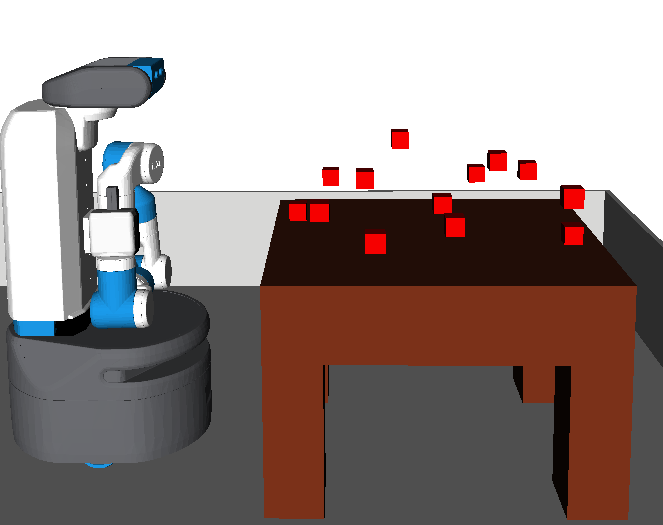}}
              \subfigure[]{\includegraphics[height=1in]{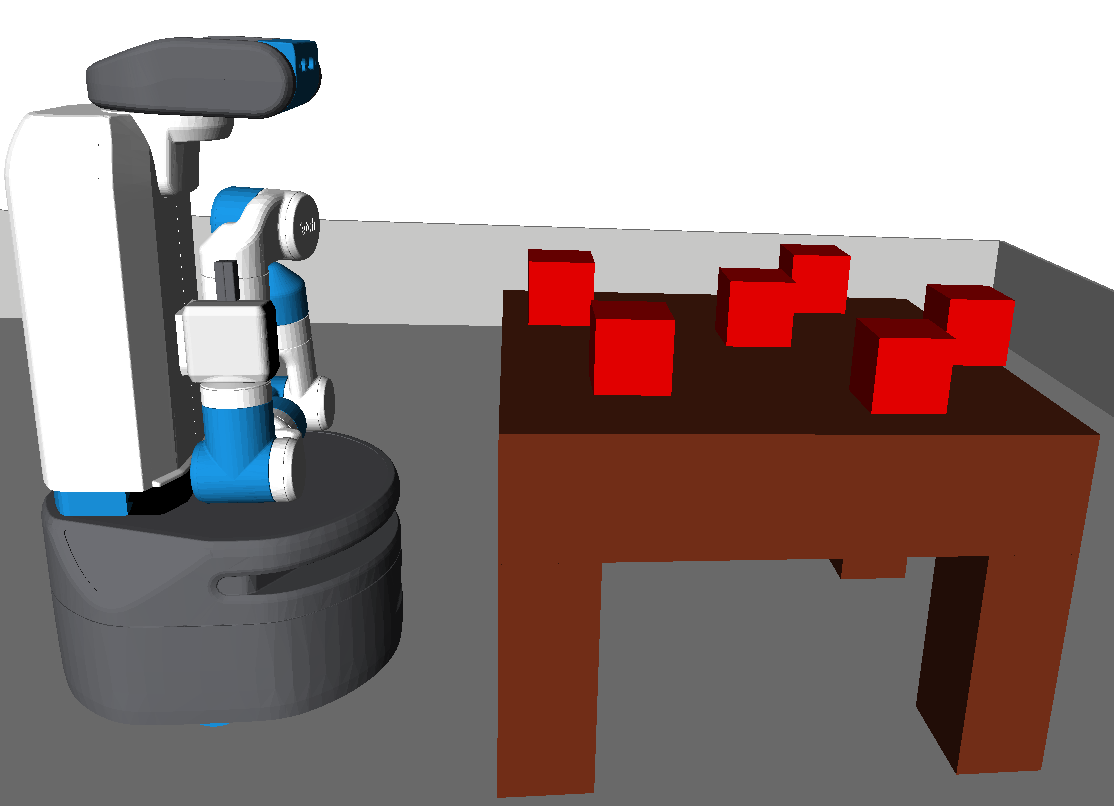}} 
          \caption{Test environments for our approach. Dimensions of environments (A)-(D) are $\mathbf{5m \times 5m}$ and dimensions of environments (E)-(G) are $\mathbf{25m \times 25m}$. (A)-(G) are used for navigational problems while (H) and (I) are used for manipulation problems.}
          \label{fig:envs}

          \Description{Test environments for our approach: The figure shows test environments for our approach. A to G are used to test navigational planning problems and I and G are used to test manipulation planning problem.  A to D are 5 meters by 5 meters while E to G are 25 meters by 25 meters.}
      \end{figure}
\renewcommand{\thesubfigure}{(\alph{subfigure})} 

\subsection{Learning to Predict Critical Regions}
\label{sub:learn}

We now present our approach for learning a model $\Phi$ that predicts critical regions for a given environment. We first explain how our approach is able to generate a robot-specific architecture of the network and then describe the training process. 

 \subsubsection{\textbf{Deriving Robot Specific Network Architectures}}
 \label{sec:network}
We use the standard fully convolutional UNet architecture~\citep{Ronneberger2015unet} as our base network. Appendix~B includes network architecture for this network. We use the robot's geometry and its number of DOFs to derive a robot specific architecture as follows:  

Let $n$ be the number of DOFs of the robot and let $k$ be the number of DOFs that are not determined by the location of the robot's end-effector in the workspace. For manipulation problems, we consider gripper of the robot as the its end-effector and for navigational problems, we consider the robot's base link as its end effector. First, we use these parameters to update the base UNet architecture. We use the last layer of the base architecture to predict critical regions for the end-effector's location and include $k$ additional convolutional layers to predict critical regions for each $k$ DOFs that are not determined by the location of the robot's end effector.
\begin{figure}[t]

        \centering
        \subfigure[]{\includegraphics[width=1in]{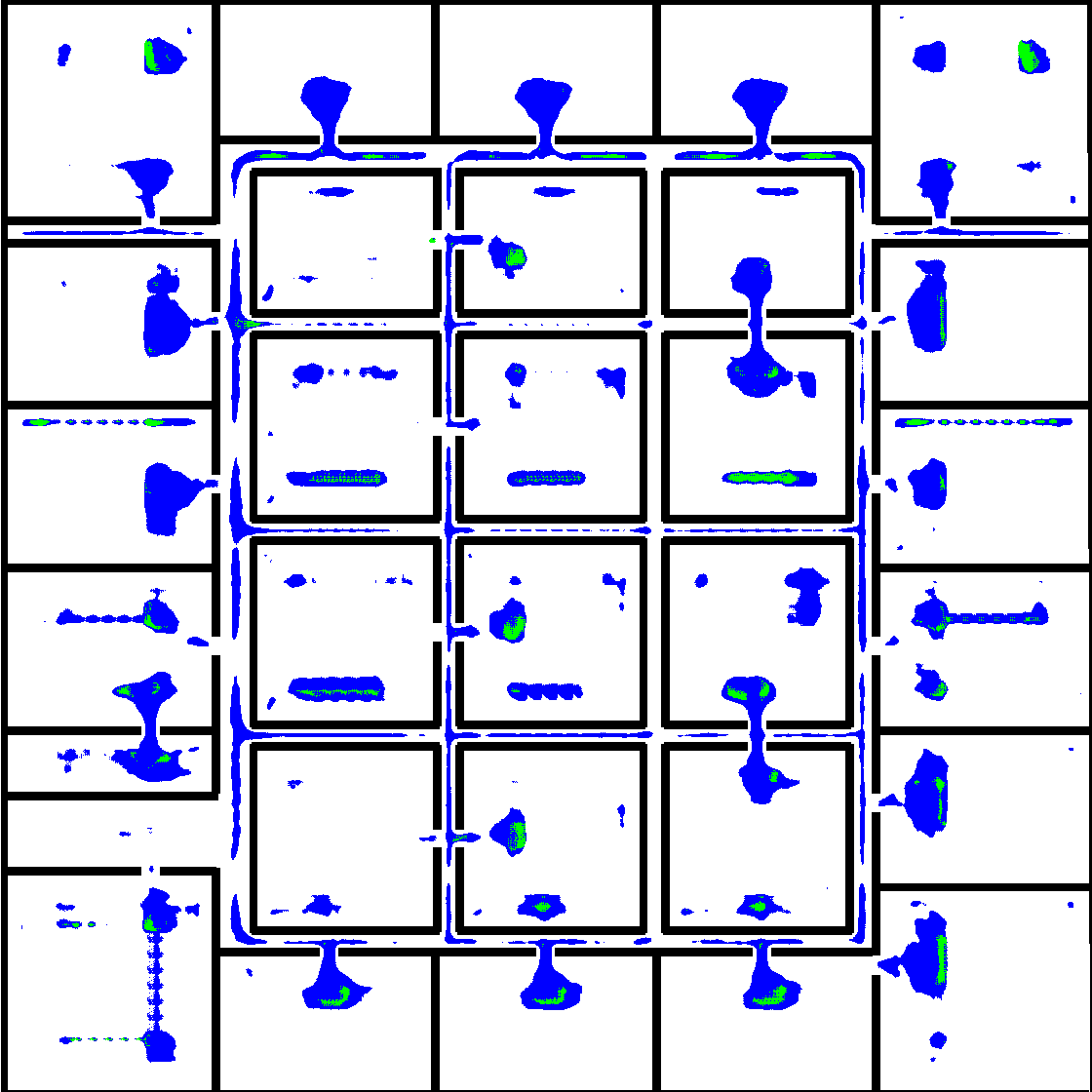}}  
        \subfigure[]{\includegraphics[width=1in]{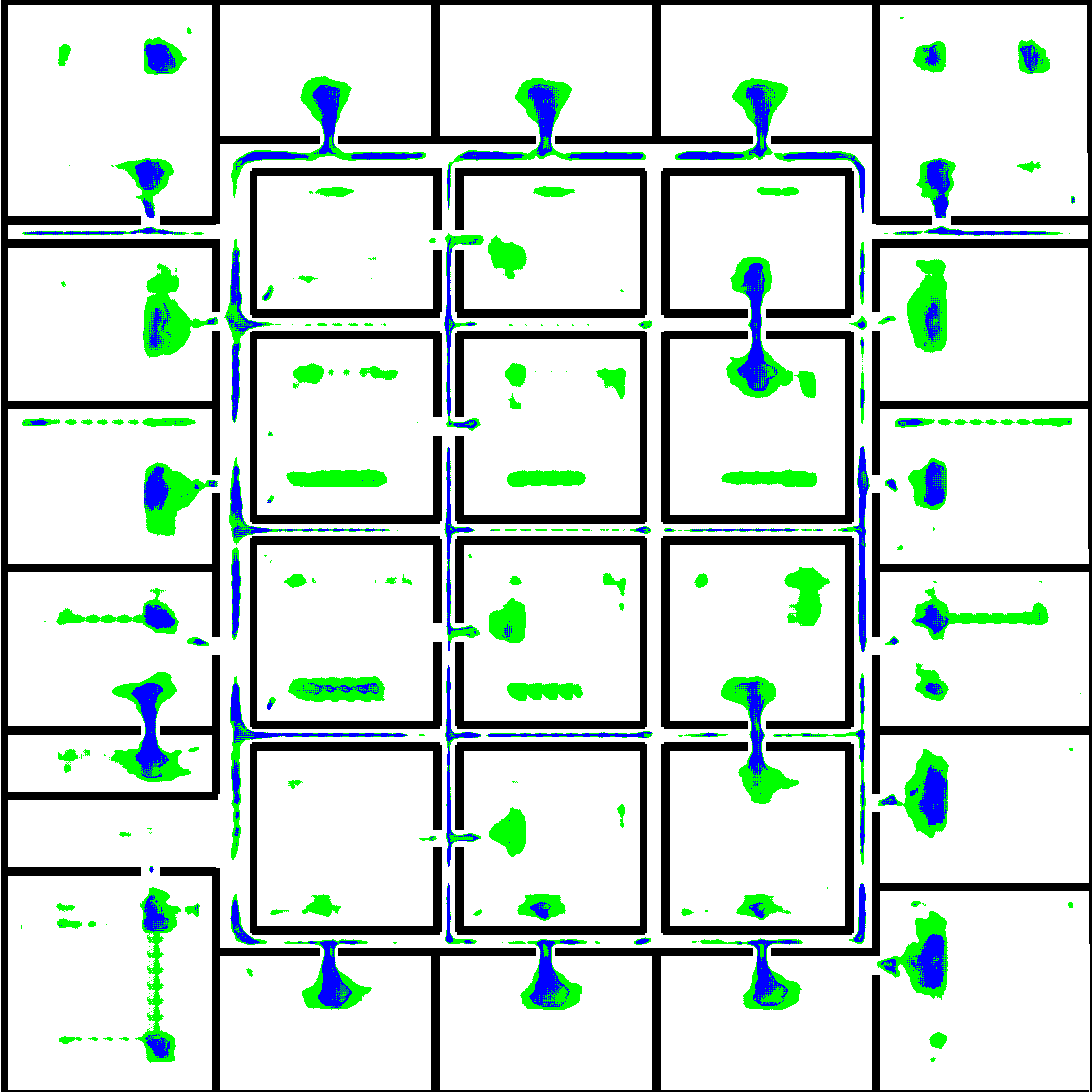}}
        \subfigure[]{\includegraphics[width=1in]{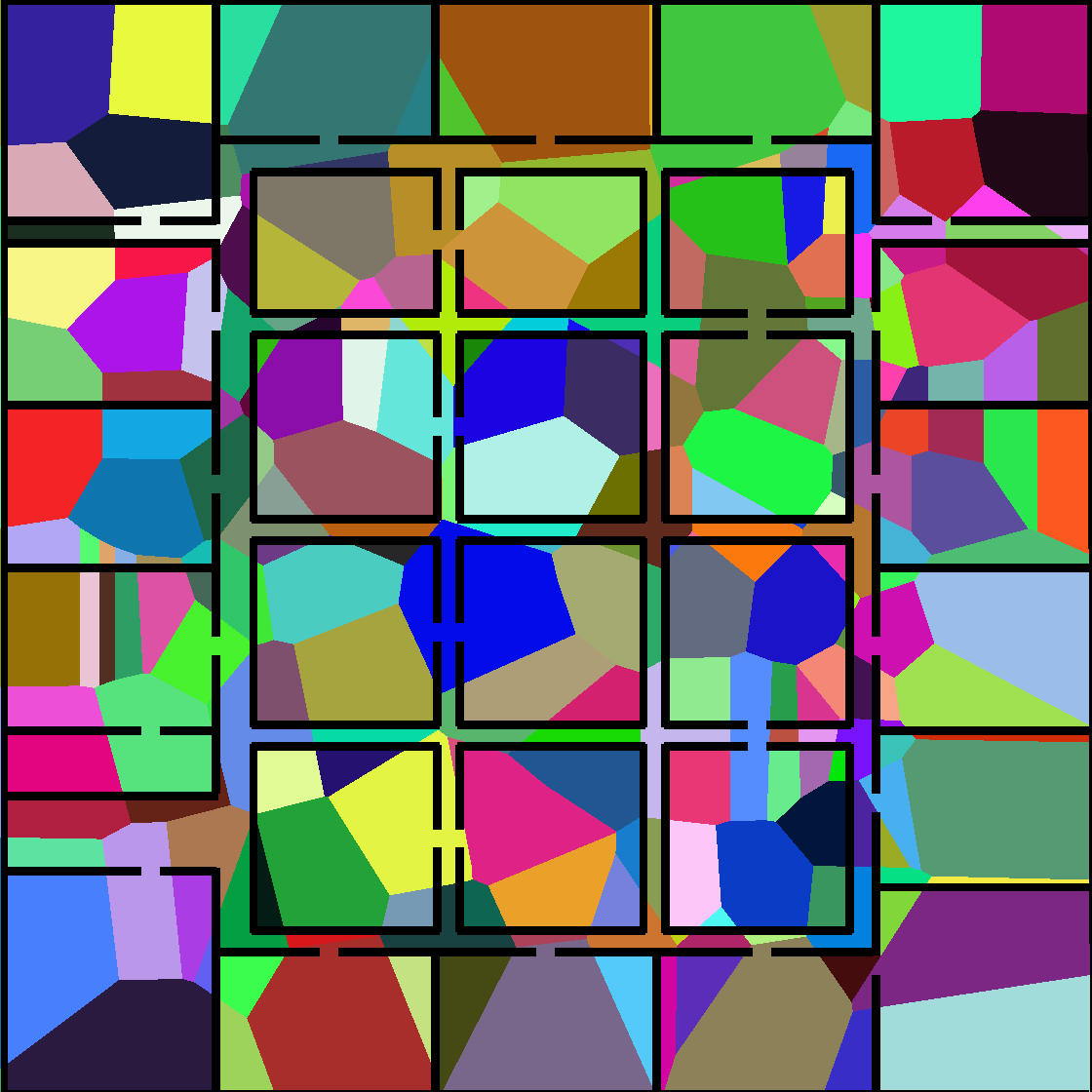}}
       \caption{Critical regions and generated abstraction for a $\mathbf{4}$-DOF hinged robot. (a) and (b) Critical regions predicted by the model. Blue regions in (a) show that model predicted the robot's base link to be horizontal while green regions show that the model predicted the robot's base link to be vertical. Blue regions in (b) show that the network predicted the hinge to be closer to $\mathbf{180^{\circ}}$ and green regions show that the network predicted it to be  closer to $\mathbf{90^{\circ}}$ or $\mathbf{270^{\circ}}$. (c) 2D projections of state abstraction generated by our approach though our approach does not need to explicitly generate these abstractions.}
       \label{fig:se3_results}
       \Description{Critical regions identified by our approach for a hinged robot wiht 4 degrees of freedom and abstractions constructed by our approach.}
   \end{figure}

The input to the network is a tensor of dimension four. The size of the first three dimensions of the input tensor depends on the number of bins used to discretize the environment (which can be arbitrary). The number of channels in the input tensor is determined using the parameter $n$. If the robot has $n$ DOFs, then the input tensor would have a total of $n+1$ channels. The first channel in the input represents the occupancy matrix of the environment. It is generated by performing a raster scan of the environment. The rest of the $n$ channels represent goal values for each DOF of the robot -{}- one for each DOF of the robot. 

Similarly, each label is a tensor of dimension four. The size of the first three dimensions is similar to the input tensor. The number of channels in the label tensor is also computed using the robot's geometry. For a robot with $k$ DOFs that are not determined by the location of robot's end effector in the workspace, the label tensor would have a total of $k+1$ channels. The first channel represents critical regions for the end-effector's location in the workspace and the rest of the $k$ channels represent critical regions for the $k$ DOFs that are not determined by this location -{}- one channel for each of the $k$ DOFs of the robot.

   E.g, consider a $5$-DOF hinged robot.  The robot's $5$ DOFs are $(x,y,z,\theta,\omega)$ where $x$, $y$, and $z$ represent the location of the robot's base link in the workspace, $\theta$ represents the rotation of the base link, and $\omega$ represents the hinged angle. So for this robot, $n$ would equal to $5$ and $k$ would equal to $2$ as only the base rotation $\theta$ and the hinged angle $\omega$ are not determined by the location of the robot's end-effector (base link in this case) in the workspace.  So according to the previous discussion, the network would contain $k=2$ additional layers to predict critical regions for $\theta$ and $\omega$. The input tensor would have a total of $n+1=6$ channels and the label tensor would have a total of $k+1 =3$ channels. 
   \begin{figure}[t!]
    \vspace{-1em}
        \centering
        \subfigure[]{\includegraphics[height=1.15in]{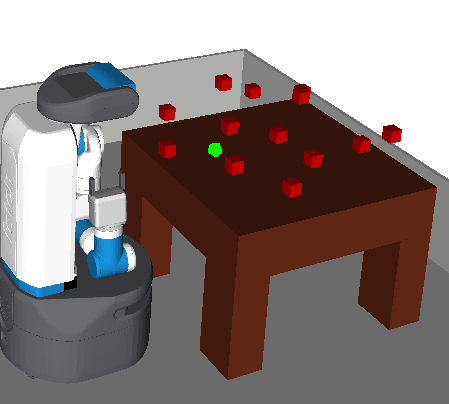}}
           \subfigure[]{\includegraphics[height=1.15in]{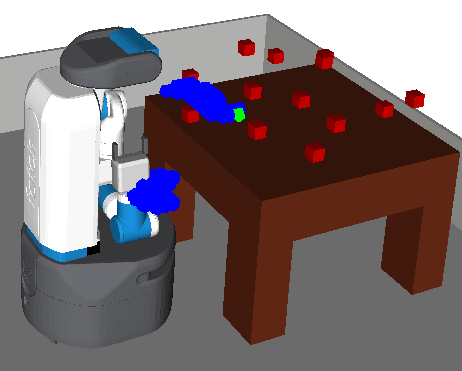}}
       \caption{Critical regions for arm manipulation task using an $\mathbf{8}$-DOF Fetch. The green region in (a) shows the goal location for the end effector. (b) shows the critical regions generated by the learned model. Although the network predicts CRs for all the joints, only CRs for end-effector's location are shown.}
       \label{fig:fetch_results}
       \Description{Critical regions identified by our approach for a fetch robot with 8 degrees of freedom.}
   \end{figure}

  \begin{figure*}[t]
    ~ \\
        \begin{center}
             \includegraphics[width=\textwidth]{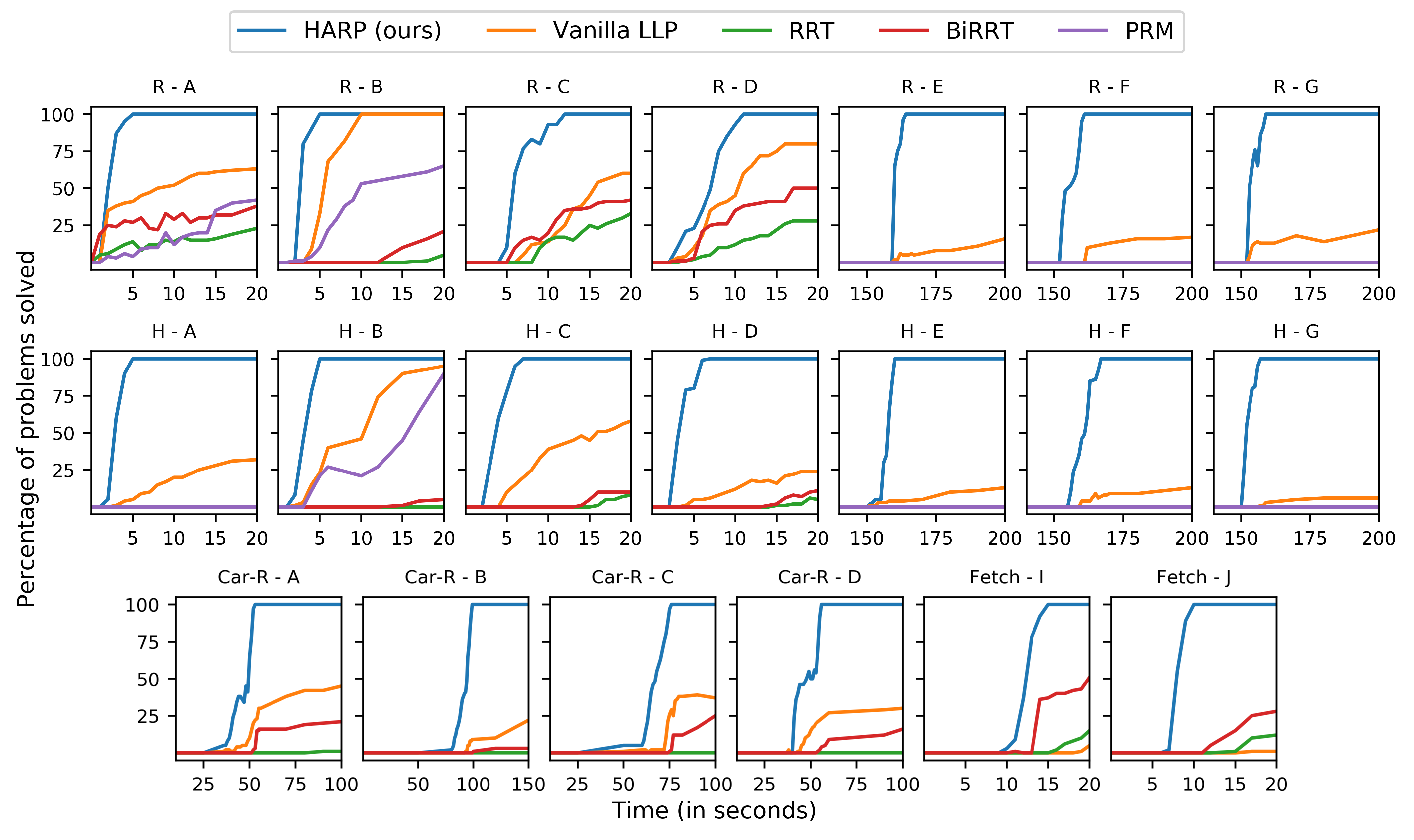}
        \end{center}
        \caption{Each plot shows the fraction of $\mathbf{100}$ independently generated motion planning tasks solved (y-axis) in the given time (x-axis) for all the test environments and robots. The title of each subplot represents the robot and the environment. E.g., ``R - A'' stands for rectangular robot in environment A (Fig. \ref{fig:envs}(A)) and ``H - A'' stands for hinged robot in environment A. }
        \label{fig:results}
        \Description{Comparision with baselines: The figure has 20 plots. The figure shows qualitative comparision of our appraoch with sampling-based and learning-based state-of-the-art motion planners.}
    \end{figure*}

\subsubsection{\textbf{Training the Network}} 
The layer predicting critical regions for the end-effector's locations in the workspace, $(L_l)$,  uses the sigmoid activation as the task is similar to element-wise classification. The layers predicting critical regions for the rest of the degrees of freedom that are not determined by the end-effector's location, $(L_{i})$ use the softmax activation as the task corresponds to multi-class classification. The loss function is defined as follows:

\begin{dmath*}
    \mathcal{L} = \mathcal{L}_{L_1} + \sum_{i=0}^{k} \mathcal{L}_{L_i}
\end{dmath*}
where, $\mathcal{L}_{L_1}$ is weighted log loss and $\mathcal{L}_{L_i}$ is softmax cross entropy loss for $i^{\text{th}}$ degree of freedom not determined by  end-effecot's location in the workspace. We use \emph{ADAM Optimizer} \citep{kingma2014adam} with learning rate $10^{-4}$. We implement the UNet architecture shown in the appendix using Tensorflow \cite{abadi2016tensorflow}. We train the network for $50,000$ epochs.

\section{Empirical Evaluation}
\label{sec:evaluation}

We extensively evaluate our approach in twenty different scenarios with four different robots. All experiments were conducted on a system running \emph{Ubuntu 18.04} with \emph{8-core i9} processor, \emph{32 GB RAM}, and an \emph{Nvidia 2060} GPU (our approach uses only a single core)
 OpenRAVE robot simulator~\citep{diankov10_openrave}.
  We compare our approach with state-of-the-art motion planners such as \emph{RRT}~\citep{lavalle1998rapidly}, \emph{PRM}~\citep{kavraki1996probabilistic}, and \emph{BiRRT}~\citep{kuffner2000rrt}. As LLP is implemented using \emph{Python}, we use the \emph{Python} implementation of the baseline algorithms available at \url{https://ompl.kavrakilab.org/} for comparison. Our training data, python implementation, trained models, and results are available at~\url{https://aair-lab.github.io/harp.html}.

\subsubsection{\textbf{$3$-DOF Rectangular Robot (R)}}
For the first set of experiments, the objective is to solve motion planning problems for a $3$-DOF rectangular robot. The robot can move along the $x$ and $y$ axes and it can rotate around the $z$ axis. 

\subsubsection{\textbf{$3$-DOF Non-Holonomic Rectangular Car Robot (Car)}} For the second set of experiments, we evaluated our approach with a rectangular non-holonomic robot similar to a simple car. Controls available to operate the robot were linear velocity $v \in [-0.2, 0.2]$ and the steering angle $\theta \in [-\frac{\pi}{4},\frac{\pi}{4}] $ while three degrees of freedom (location along $x$-axis, location along $y$-xis, and rotation around $z$-axis) were required to represent the robot's transformation. 

\subsubsection{\textbf{$4$-DOF Hinged Robot (H)}} For the third set of experiments, we used a robot with a hinge joint to evaluate our approach. The robot's $4$ DOFs are its location along $x$ and $y$ axes, rotation along $z$-axis ($\theta$), and the hinge joint ($\omega$) with the range $[-\frac{\pi}{2}, \frac{\pi}{2}]$.

\subsubsection{\textbf{8-DOF Fetch Robot}} For the last set of experiments, we used our approach with a mobile manipulator named Fetch~\cite{wise16_fetch} to perform arm manipulation. The goal of this experiment is to evaluate the scalability of our approach to robots with high degrees of freedom. 

\subsubsection{\textbf{Generating Training Data}} We use $20$ training environments to generate training data for navigational problems (robots $R$, Car, and $H$) and $6$ training environments for manipulation problems (Fetch robot). To generate training data for each training environment $E$, we randomly sample a set of $100$ configurations $\mathcal{G}$, which would serve as goal states for the motion planning problems. For each goal configuration $g_i \in \mathcal{G}$, we sample a set of $50$ initial states $\mathcal{I}$. We use an off-the-shelf motion planner to compute motion plans for these inital and goal states and combined solutions to generate critical regions for each goal state. We use these motion plans to compute critical regions for the given pair of environment $E$ and the goal configuration $g$ using the Def.~\ref{def:critical}. We use OpeanRAVE robot simulator~\citep{diankov10_openrave} and OMPL's implementation of BiRRT~\citep{kuffner2000rrt} to generate the training data. 

To generate the input vector, we discretize the environment into $n_d$ bins. This also implies that the degrees of freedom of the robot that are determined by the robot's end-effecotr's location in the workspace are also discretized into $n_d$ bins. We discretize the rest of the degrees of freedom that are not determined by the robot's end-effecotr's location into $p$ bins to generate input and label tensors according to the discussion in Sec.~\ref{sec:network}. We augment the computed tensors by rotating them by $90^\circ$, $180^\circ,$ and $270^\circ$ to obtain more training samples. We also omit an additional dimension from the tensors for navigational problems as we fix the robot's $z$-axis for these problems. The table below shows the training details for each robot.

\begin{table}[h!]
  \footnotesize
  \centering
  \begin{tabular}{|l|c|c|l|l|c|c|}
  \hline
  Robot  & $n_d$ & $p$  & \begin{tabular}[c]{@{}l@{}}Input \\ Shape\end{tabular} & \begin{tabular}[c]{@{}l@{}}Label \\ Shape\end{tabular} & |E| & \# Samples \\ \hline
  $R$, Car  & $224$ & $4$       &  (224,224,4)                                             & (224,224,2)                                           & 20                                                                                       & 8000                                                                               \\ \hline
  $H$ & $224$ & $5$      & (224,224,3)                                             & (224,224,21)                                           & 20                                                                                       & 8000                                                                               \\ \hline
  Fetch & $64$ & $10$ & (64,64,64,11)                                           & (64,64,64,9)                                          & 6                                                                                        & 720                                                                                \\ \hline
  \end{tabular}
  \end{table}

\subsubsection{\textbf{Evaluating the Approach}} Figures \ref{fig:envs} and \ref{fig:fetch_results} show the test environments (unseen by the model while training) for our system. Environments shown in Fig. \ref{fig:envs} are inspired by the indoor office and household environments. Our training data consisted of $20$ environments similar to the ones shown in Fig. \ref{fig:envs}(A)-(D) with dimensions $5m \times 5m$. We investigate the scalability of our approach by conducting experiments in environments shown in Fig. \ref{fig:envs}(E)-(G) with dimensions $25m \times 25m$ (much larger than training environments). To handle such large environments without making any changes to the DNN, we use the standard approach of sliding windows with stride equal to window-width~\citep{window_1,window_2,window_3,window_4}. This crops the larger environment into pieces of the size of the training environments. Individual predictions are then combined to generate a set of critical regions for arbitrary large environments. We also evaluate the applicability of our approach to non-holonomic robots in environments shown in Fig. \ref{fig:envs}(A)-(D).

\subsection{Analysis of the Results}
As discussed in the introduction (Sec.~\ref{sec:intro}), our objective is to show whether $1)$ state and action abstractions can be derived automatically and $2)$ whether auto-generated state and action abstractions can be efficiently used in a hierarchical planning algorithm. Additionally, we also investigate $3)$ does dynamically updating the heuristic function (line $6$ in Alg. \ref{alg:harp}) improve Alg.~\ref{alg:harp}'s efficiency?

\subsubsection{\textbf{$1)$ Can We Learn State and Action Abstractions?}}
Our approach learns critical regions for each DOF of the robot. Fig. \ref{fig:se3_results} shows critical regions predicted by our learned model for the hinged robot $H$. We can see that our model was able to identify critical regions in the environment such as doorways and narrow hallways. Fig. \ref{fig:se3_results}(a) shows critical regions for orientation of the robot's base link (captured by DOF $\theta$). The blue regions in the figure represent the horizontal orientation of the robot and the green regions represent the vertical orientation of the robot. Fig. \ref{fig:se3_results}(b) shows critical regions for the hinge joint $\omega$ for the robot $H$. Here, blue regions show that the network predicted the hinge joint to be flat (close to $0^{\circ}$) and green regions represent configurations where the model predicted "L" configurations of the robot ($\omega$ close to $90^\circ$ or $270^{\circ})$. Fig. \ref{fig:se3_results} shows that our approach was able to predict the correct orientation of the robot accurately most of the time. Our approach was able to scale to robots with a high number of degrees of freedom. Fig. \ref{fig:fetch_results}(a) shows one of the test environments used for these experiments and Fig. \ref{fig:fetch_results}(b) shows the predicted critical regions. This shows that our model was able to learn critical regions in the environment that can be used to generate efficient abstractions. We include similar results for other environments in Appendix~C.

Now we answer the second question on whether using abstractions to compute motion plans helps improve the planner's efficiency by qualitatively comparing our approach with a few existing sampling-based motion planners.

\subsubsection{\textbf{$b)$ Can Learned Abstractions be Used Efficiently for Hierarchical Planning?}}

We compare our approach against widely used SBMPs such as RRT~\citep{lavalle1998rapidly}, PRM~\citep{kavraki1996probabilistic}, and  BiRRT~\citep{kuffner2000rrt}. 

Fig. \ref{fig:results} shows the comparison of our approach with other sampling-based motion planners. The x-axis shows the time limit in seconds and the y-axis shows the percentage of motion planning problems solved in that time limit. For each time limit on x-axis, we randomly generate $100$ new motion planning problems to thoroughly test our approach and reduce statistical inconsistencies. Fig. \ref{fig:results} shows that our approach significantly outperforms all of the existing sampling-based motion planners. Specifically for environments $C$, $D$, and $E$, uniform sampling-based approaches were not able to solve a single problem in a time threshold of $600s$. 

\begin{figure}
    \centering

    \subfigure[]{
    \includegraphics[width=0.48\columnwidth]{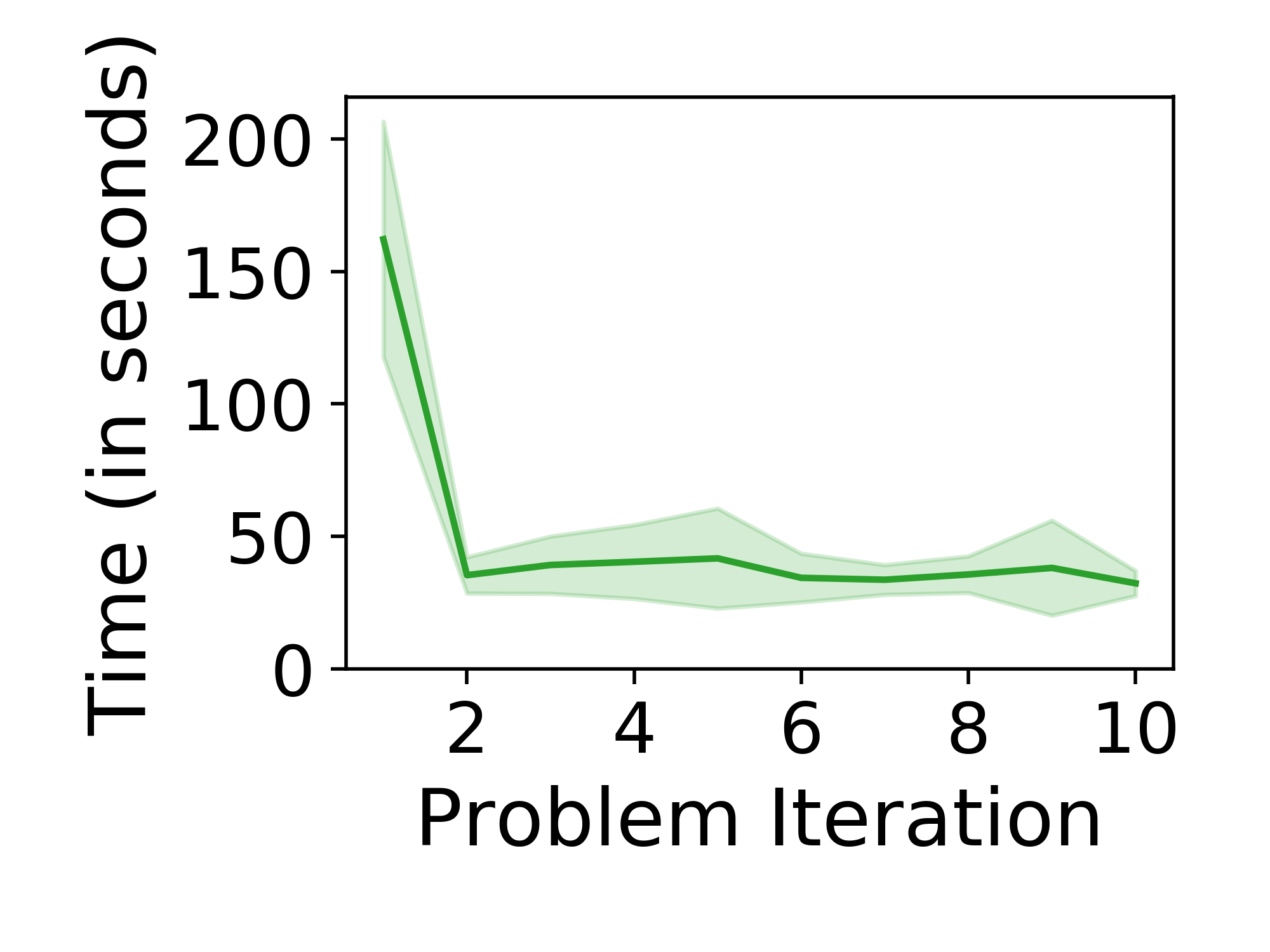}   \label{fig:experiments_A} }
    \subfigure[]{\includegraphics[width=0.48\columnwidth]{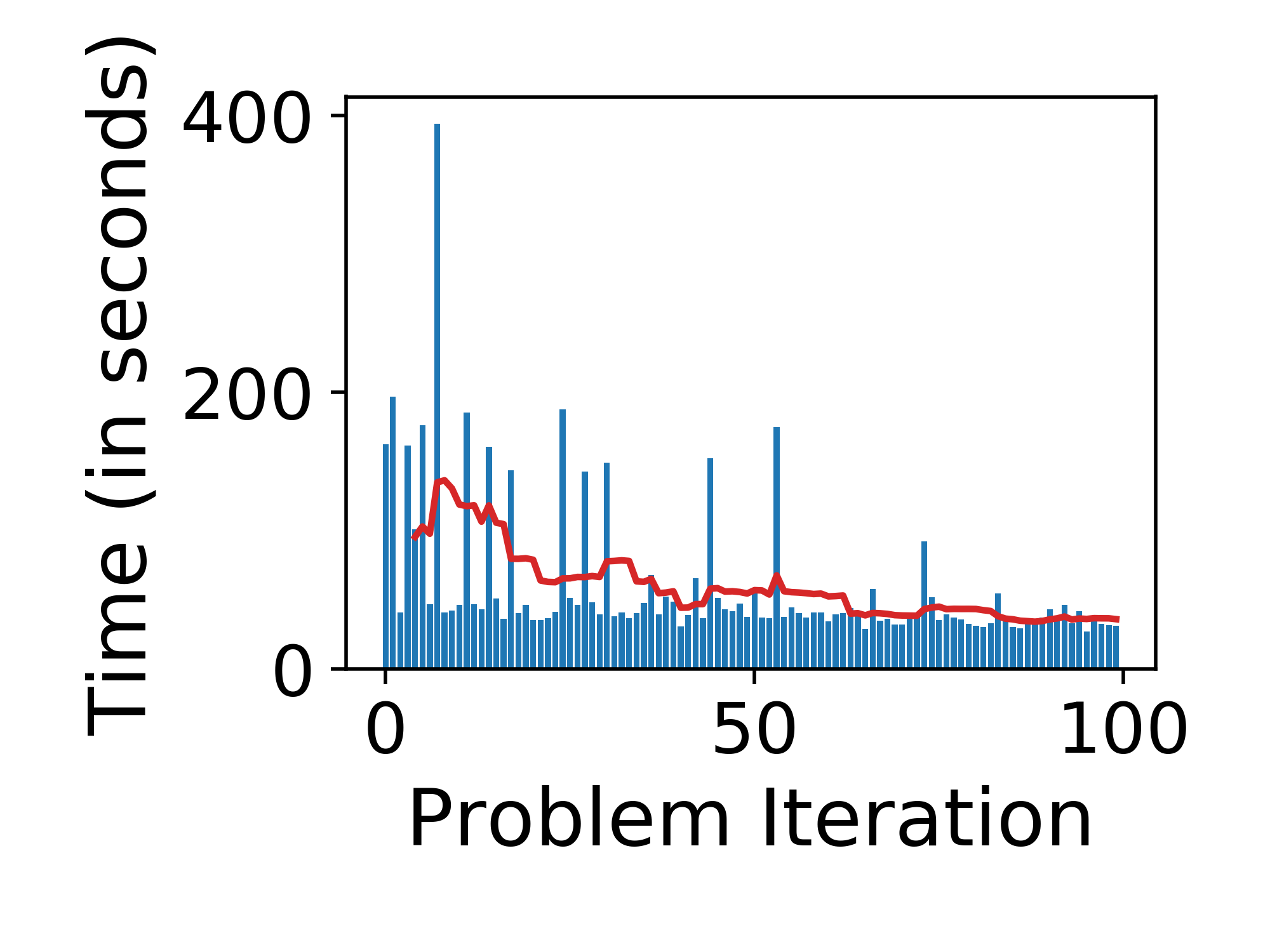}       \label{fig:experiments_B}}
    \caption{(a) Solving $\mathbf{20}$ randomly generated problems repeatedly $1\mathbf{10}$ times. x-axis show the problem iteration and y-axis shows the average time over $\mathbf{20}$ problem instances. (b) Time taken to solve $\mathbf{100}$ randomly generated problem instances. X-axis shows the problem number and y-axis shows the taken to solve each problem.}

    \Description{Plots to evaluate if dynamically updating the heurstic function imporve efficiency of our approach or not. }
\end{figure}

Our approach also outperforms the learning-based planner  \linebreak
LLP~\citep{dan_llp} (Fig. \ref{fig:results}), which uses learned critical regions but does not use state and action abstractions and does not perform hierarchical planning. This illustrates the value of learning abstractions and using them efficiently for hierarchical planning.

Similarly, we also evaluate our approach against TogglePRM~\citep{denny-wafr-2012}. TogglePRM is written in C++ and accepts only discrete \emph{SE$^2$} configuration space for a simple dot robot. We created discrete variants of environments shown in Fig. \ref{fig:envs}(A) and \ref{fig:envs}(B) with a total of $50176$ states and compare the total number of nodes sampled. For $100$ random trials, on an average our approach generated $631 \pm 278$ and $496 \pm 175$ states compared to TogglePRM which generated $4234 \pm 532$ and $19234 \pm 4345$ states for discrete variants of the environments shown in Fig. \ref{fig:envs}(A) and \ref{fig:envs}(B) respectively. Our approach was able to outperform TogglePRM since these environments do not have $\alpha$-$\epsilon$-separable passages~\citep{denny-wafr-2012}.

\subsubsection{\textbf{$c)$ Does Dynamically Updating Heurstic Function Improve Efficiency?}}

We carried out two sets of experiments. In the first set of experiments, we generated $20$ random motion planning problems and solved each problem repeatedly for $10$ times while updating the heuristic function. We maintained separate copies of high-level heuristic functions for each problem. Fig. \ref{fig:experiments_A} shows the results for this set of experiments in the environment $E$ (Fig. \ref{fig:envs}(E)) with the $4$-DOF hinged robot. The x-axis shows the planning iteration and the y-axis shows the average time over randomly generated $20$ problem instances. We can see how planning time reduces drastically once costs for abstract actions are updated.

In the second set of experiments, we generated $100$ random pairs of initial and goal states and computed motion plans for each of them. This time, we maintained a single heuristic function across all problems and updated it after each motion planning query. Fig.~\ref{fig:experiments_B} shows the result of the experiment in the environment $E$ (Fig. \ref{fig:envs}(E)) with the $4$-DOF hinged robot $H$. The x-axis shows the problem number and the y-axis shows the time taken by our approach to compute a solution. The red line in the plot shows the moving average of planning time. Fig.~\ref{fig:experiments_B} shwows  that dynamically updating the heuristic function for high-level planning helps to increase the efficiency of HARP and decrease motion planning times.

Empirical evaluation using these experiments validates our hypothesis that learning abstractions and effectively using them improves motion planning efficiency.

\section{Conclusion}
In this paper, we presented a probabilistically complete approach HARP, that uses deep learning to identify abstractions for the input configuration space. It learns state and action abstractions in a bottom-up fashion and uses them to perform efficient hierarchical robot planning. We developed a new multi-source bi-directional planning algorithm that uses learned state and action abstractions along with a custom dynamically maintained cost function to generate candidate high-level plans. A low-level motion planner refines these high-level plans into a trajectory that achieves the goal configuration from the initial configuration. 

Our formal framework provides a way to generate sound abstractions that satisfy the downward refinement property for holonomic robots. Our empirical evaluation on a large variety of problem settings shows that our approach is able to significantly outperform state-of-the-art sampling and learning-based motion planners. Through our empirical evaluation, we show that our approach is robust and can be scaled to large environments and to robots that have high degrees of freedom. Our work presents a foundation for learning high-level, abstractions from low-level trajectories. Currently, our approach works for deterministic robot planning problems. We aim to extend our approach to support stochastic settings and learn abstractions for task and motion planning problems.

\begin{acks}
We thank Abhyudaya Srinet for his help in implementing a primitive version of the presented work. We thank Kyle Atkinson for his help with creating test environments. This work is supported in part by the NSF under grants 1909370 and 1942856.
\end{acks}



\bibliographystyle{ACM-Reference-Format} 

\bibliography{aaai22}

\clearpage
\onecolumn
\section*{APPENDICES}
\renewcommand{\thesection}{\Alph{section}}
\setcounter{section}{0}

\section{Beam Search}

\begin{algorithm}[h!]
\caption{Beam Search}
\label{alg:beam_search}
\KwIn{Graph $G = <V, E>$ , initial state $s_0 \in V$, goal state $s_g \in V$, beam width $w$, heuristic $h$}
\KwOut{A set of paths from $s_0$ to $s_g$}

fringe = PriorityQueue() \\ 
fringe.add(0,($s_0$,None,[]))\\
\While{path not found  and fringe is not empty}{
    working\_fringe $\gets$ select top $w$ nodes from the fringe \\
    empty\_fringe(fringe) \\ 
    \While{working\_fringe is not empty}{
        current, path, visited $\gets$ working\_fring.pop() \\
        \If{current = $s_g$}{
            return path \\
        }
        \Else{
            path.add(current) \\ 
            visited.add(current) \\ 
            \ForEach{ node $\in $ current.successors }{
                \If{node $\notin$ visited }{
                    $p \gets g$(current) $+$ $h(s')$ \\ 
                    fringe.add($p$, (current, path, visited))                 }
            }
        }
    }
}
return false
\end{algorithm}

\section{Networks Architecture}

\begin{figure}[h!]
    \centering
    \includegraphics[width=\textwidth]{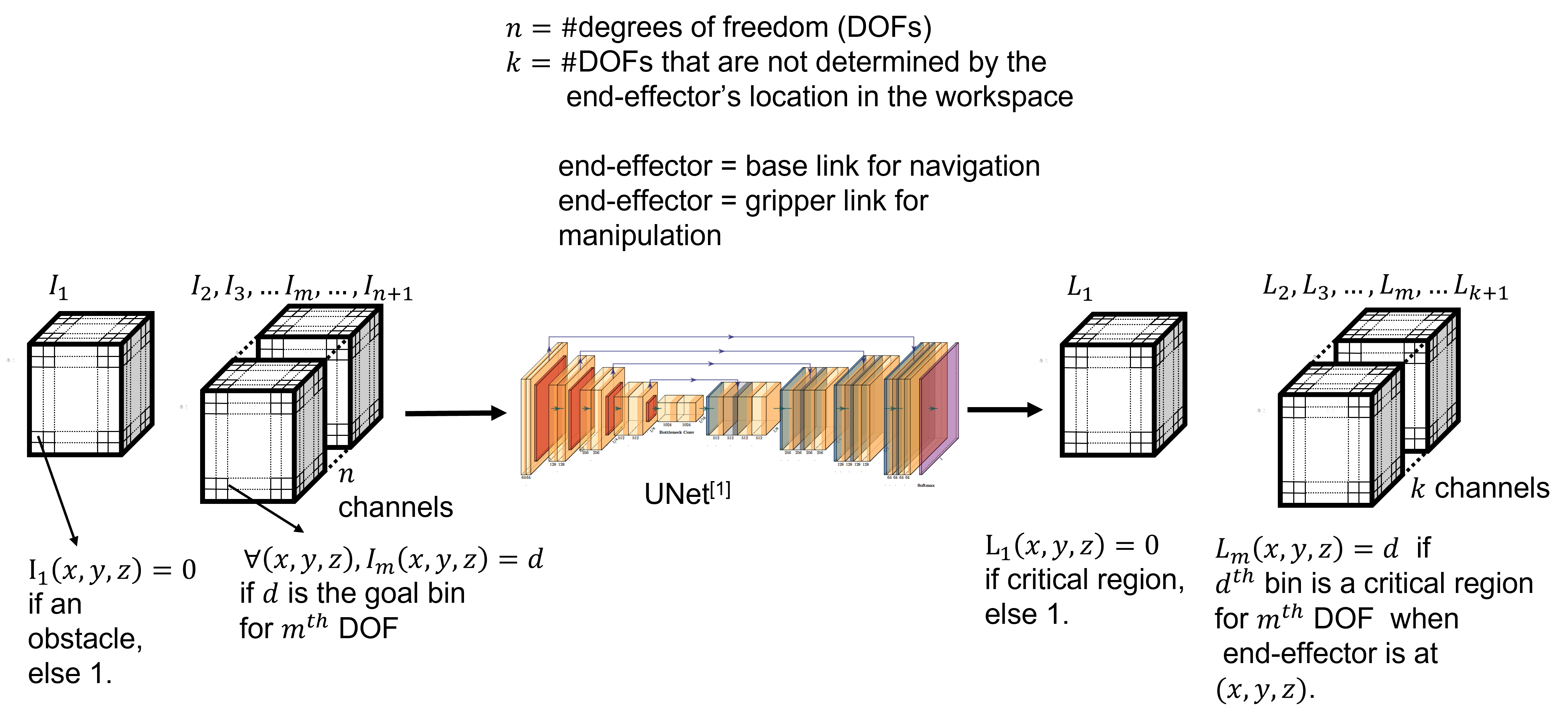}
    \label{fig:nn_arch}
\end{figure}

\blfootnote{[1] O. Ronneberger, P.Fischer, and T. Brox. 2015. U-Net: Convolutional Networks for Biomedical Image Segmentation. In Proc. MICCAI, 2015}

~\newpage
\section{Results}
\subsection{$3$-DOF Rectangular and Car Robots}

\begin{center}

\begin{figure}[h!]
    \setkeys{Gin}{width=\linewidth}
    \begin{center}
    \centering
\begin{tabularx}{\textwidth}{p{0.041\textwidth} p{0.19\textwidth} p{0.19\textwidth} p{0.19\textwidth} p{0.19\textwidth} p{0.04\textwidth} }        &\includegraphics{env_8_0_1.png} &  \includegraphics{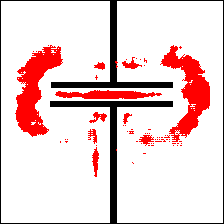} &  \includegraphics{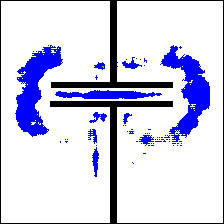} & \includegraphics{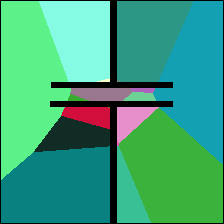} &\\
        & \includegraphics{env_10_2_1.png} &  \includegraphics{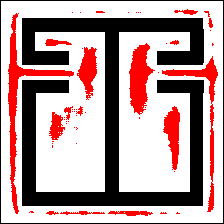} & \includegraphics{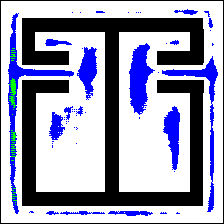} & \includegraphics{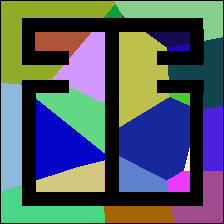} &\\
         & \includegraphics{env_41_0_1.png} &  \includegraphics{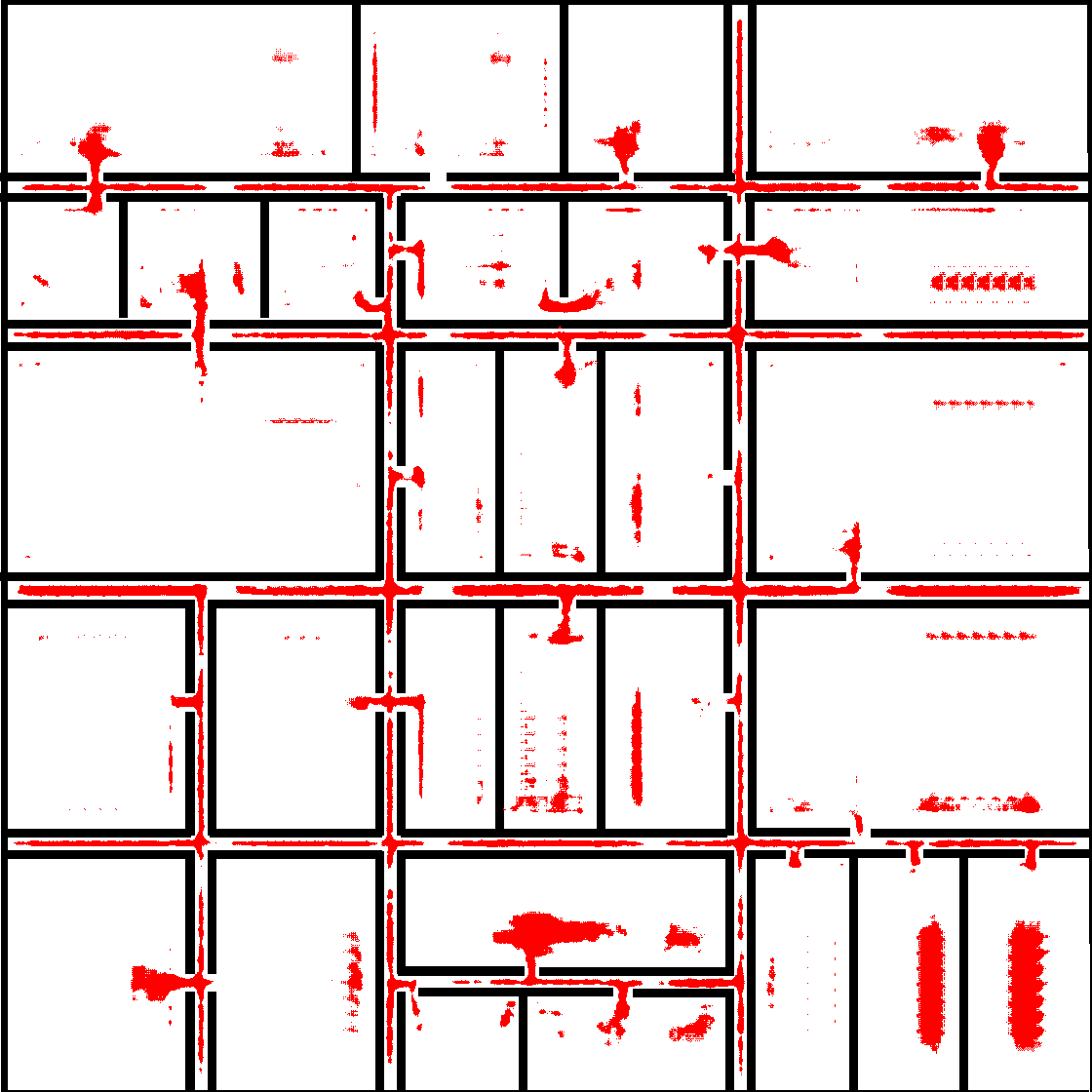} &  \includegraphics{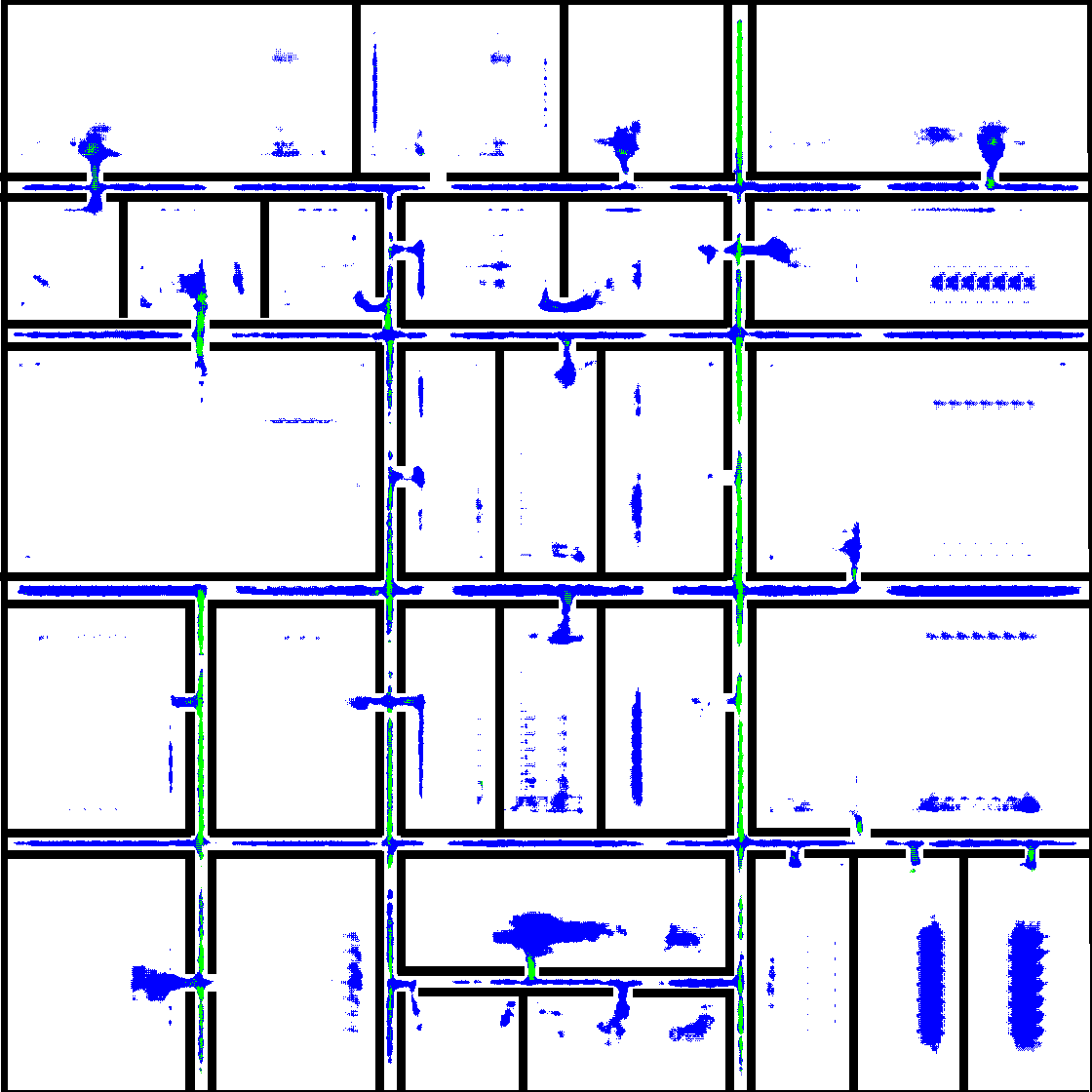} & \includegraphics{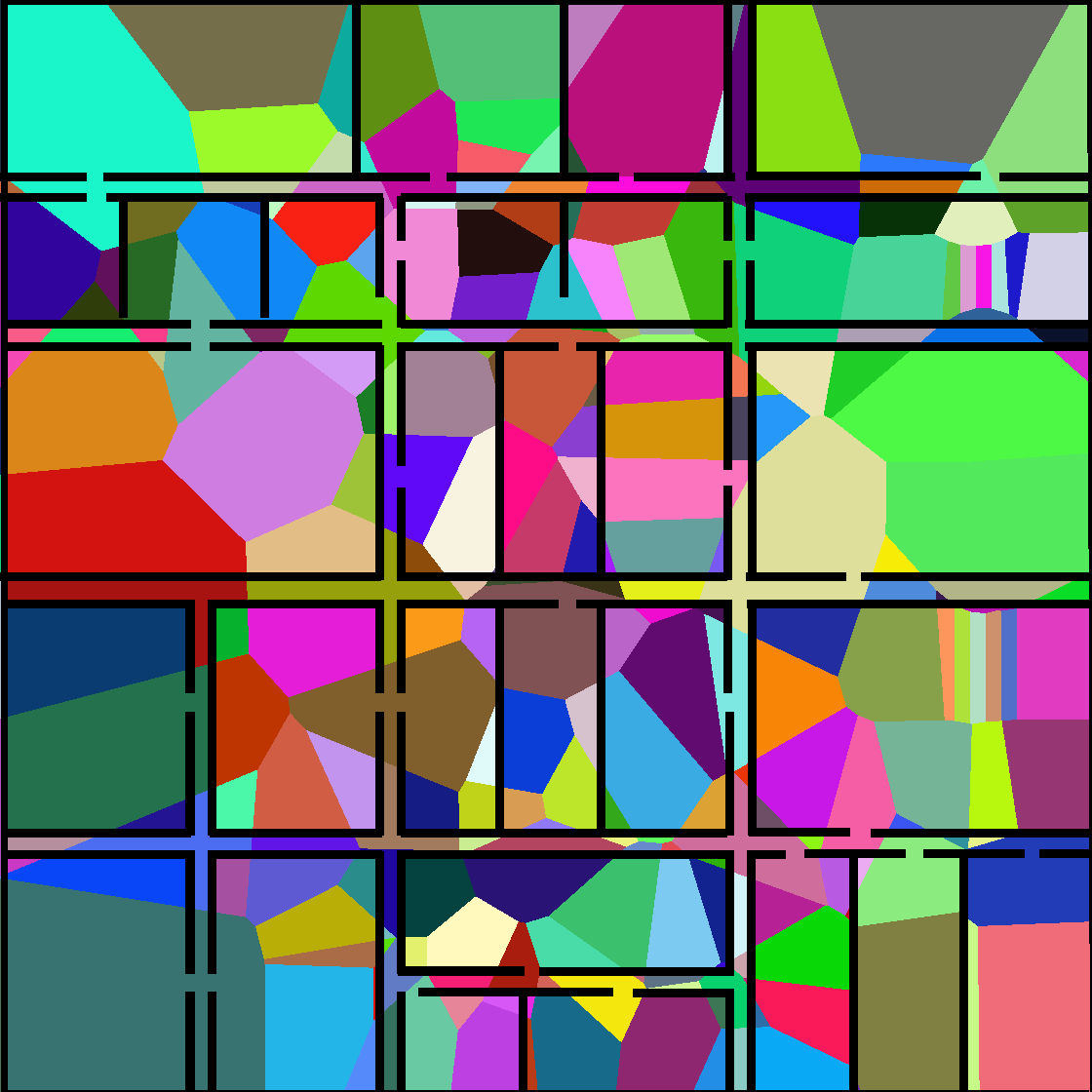} &\\
        & \includegraphics{env_42_0_1.png} &  \includegraphics{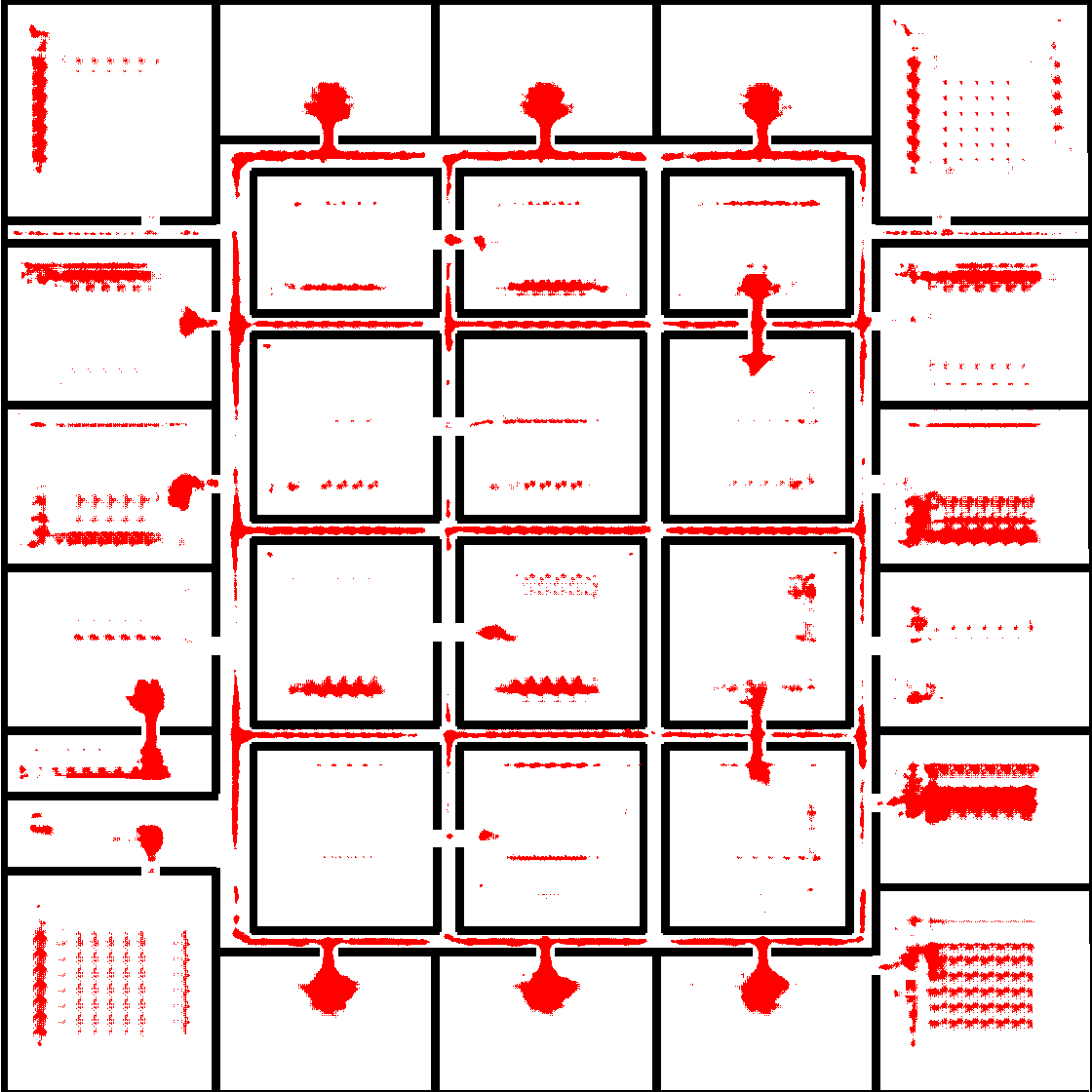} & \includegraphics{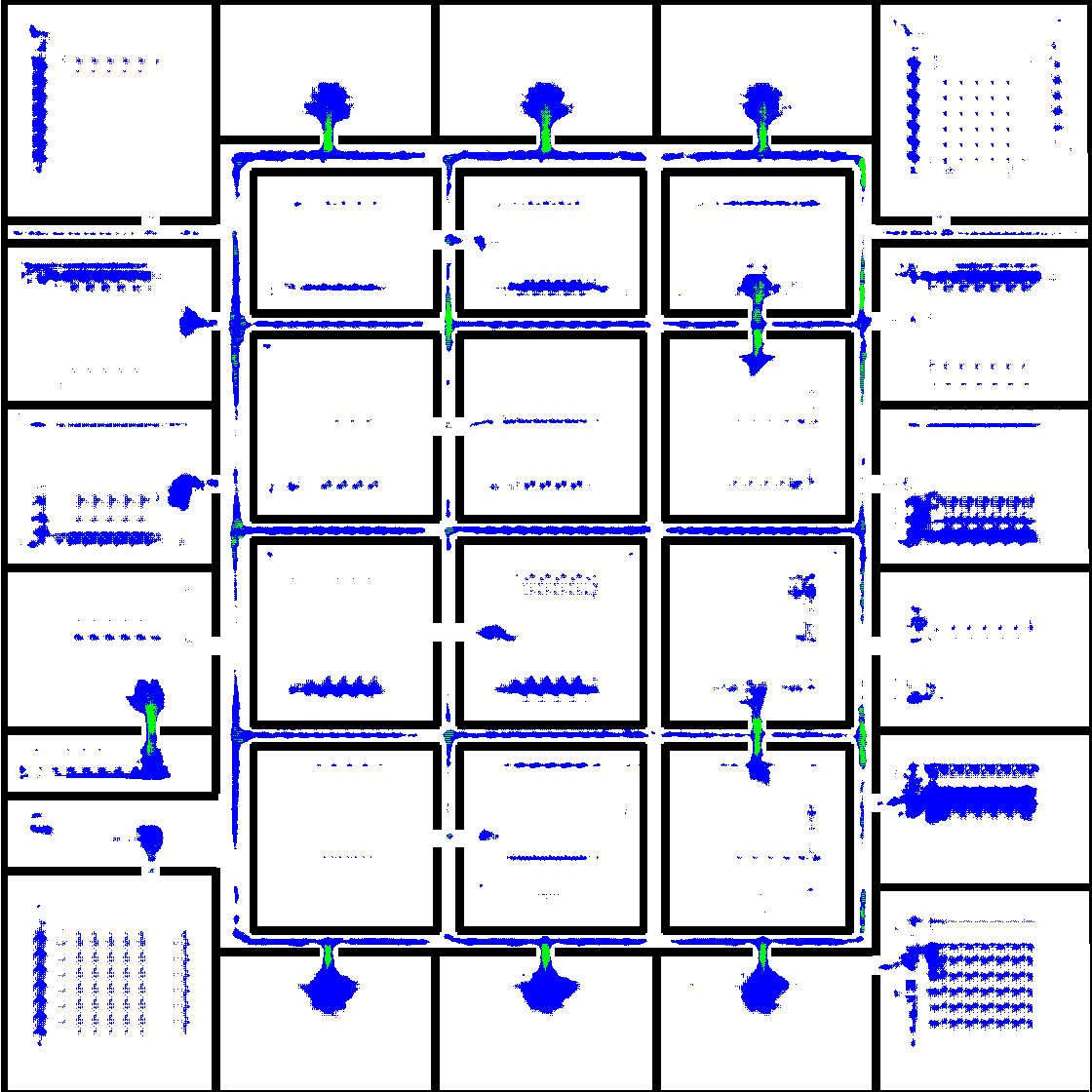} & \includegraphics{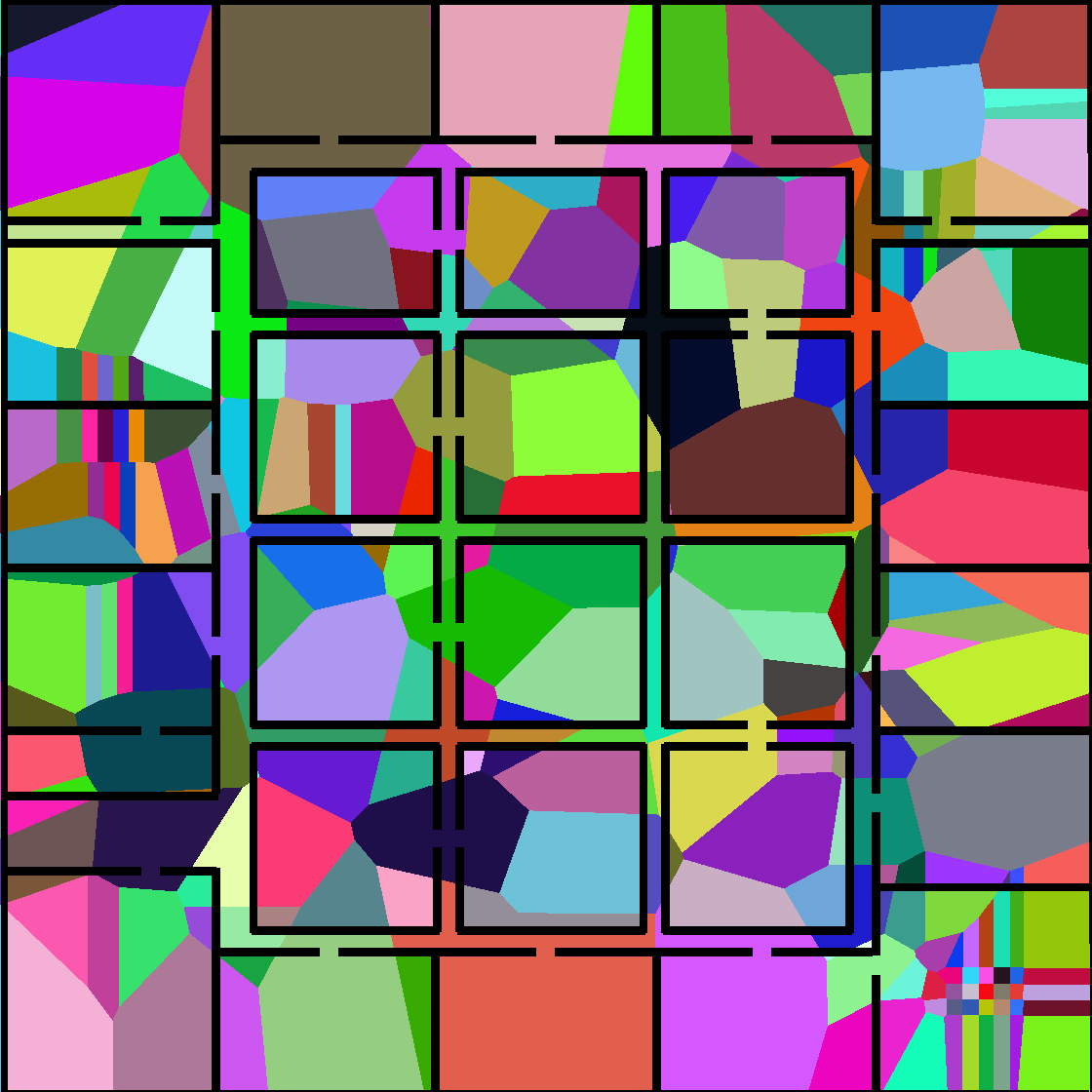} &\\
        & \includegraphics{env_43_0_1.png} &  \includegraphics{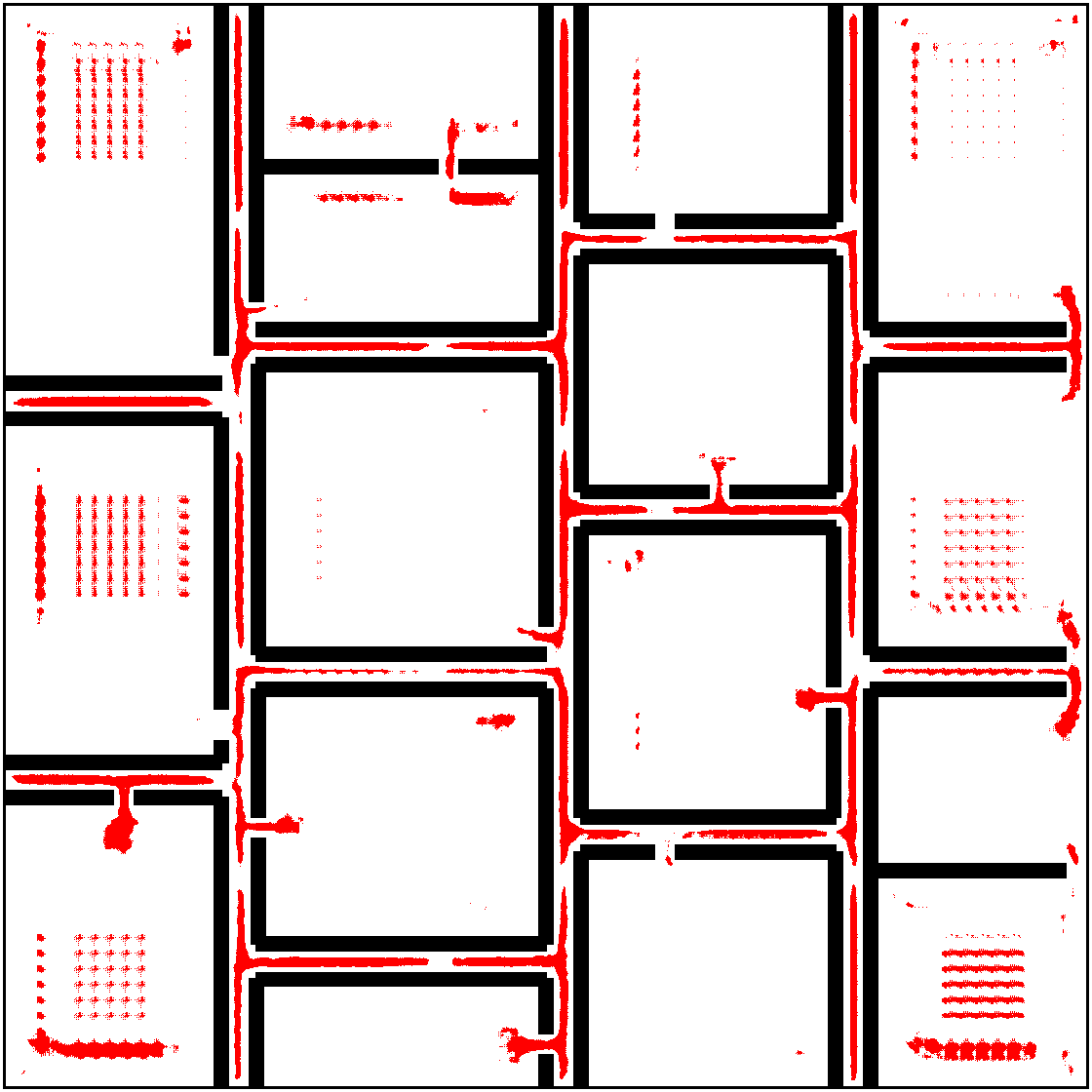} & \includegraphics{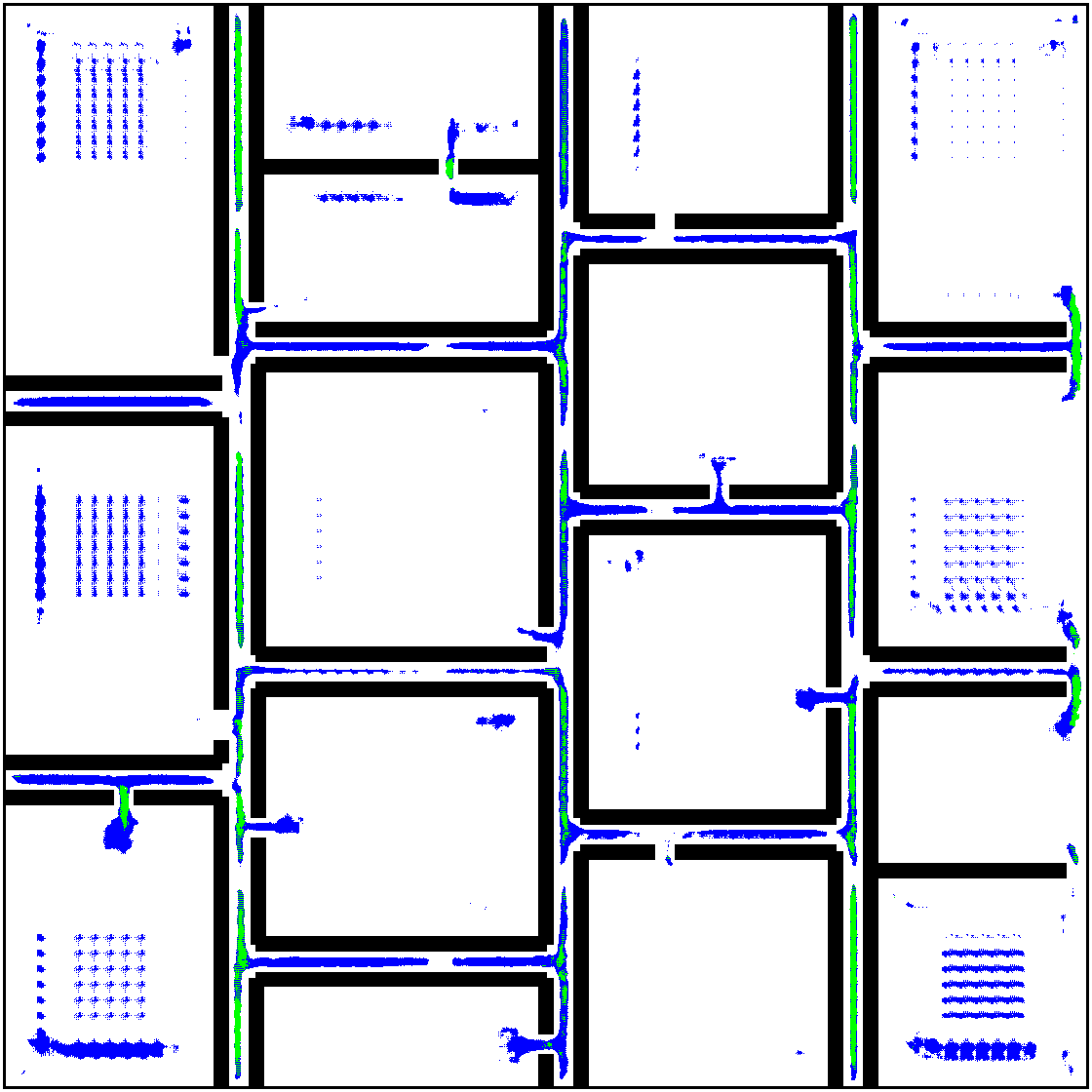} & \includegraphics{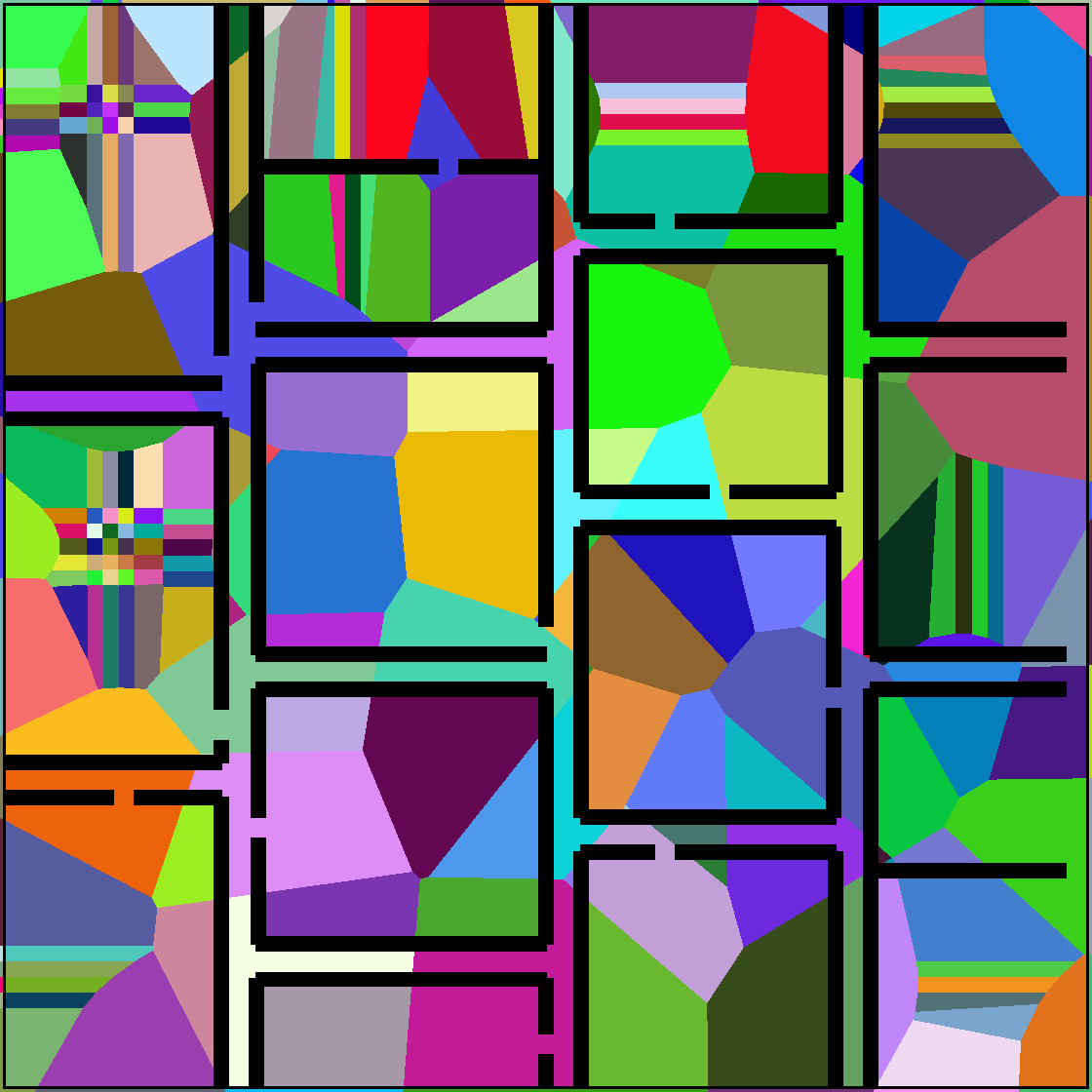} &\\
        & \centering (a) & \centering (b) & \centering (c) & \centering(d) & \\
    \end{tabularx}
    \end{center}
    \caption{Predicted critical regions and computed state abstractions for $3$-DOF rectangular (R) and Car robots. (a) Input to the environment. (b) Critical regions for the location of the robot's base link in the workspace. (c) Critical regions for the  rotation of the robot's base 
    link of the the robot. Blue regions are locations where the network predicted the robot to be horizontal and green regions are the regions where the network predicted the robot to be vertical.
    (d) 2D projections of state abstraction generated by our approach. State abstractions are strictly for visualization as our approach does not require to generate them.
    }

\end{figure}

\end{center}

\newpage
\subsection{$4$-DOF Hinged Robot}

\begin{figure}[h!]
    \setkeys{Gin}{width=\linewidth}
    \begin{center}
    \begin{tabularx}{\textwidth}{p{0.18\textwidth} p{0.18\textwidth} p{0.18\textwidth} p{0.18 \textwidth} p{0.18\textwidth} }
         \includegraphics{env_8_0_1.png} &  \includegraphics{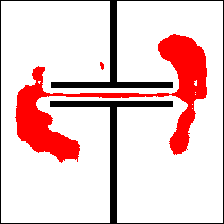} &  \includegraphics{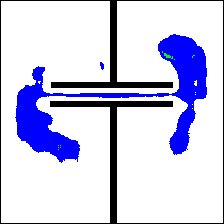} & \includegraphics{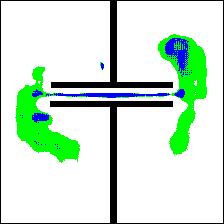} &  \includegraphics{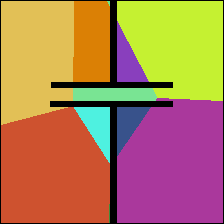} \\
         \includegraphics{env_10_2_1.png} &  \includegraphics{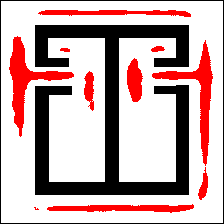} & \includegraphics{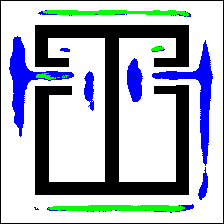} & \includegraphics{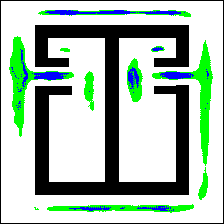} & \includegraphics{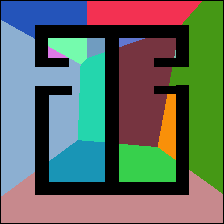}\\
          \includegraphics{env_41_0_1.png} &  \includegraphics{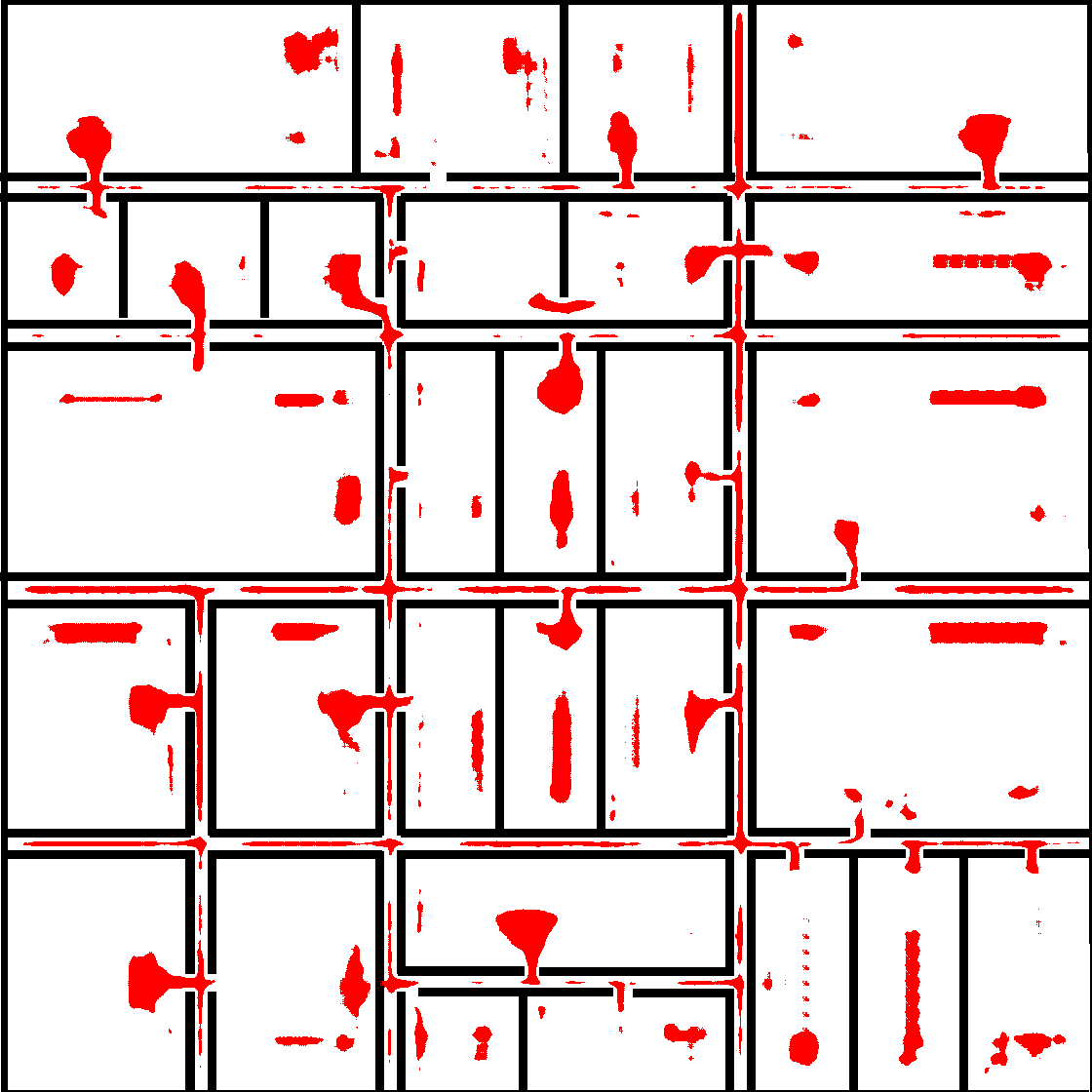} &  \includegraphics{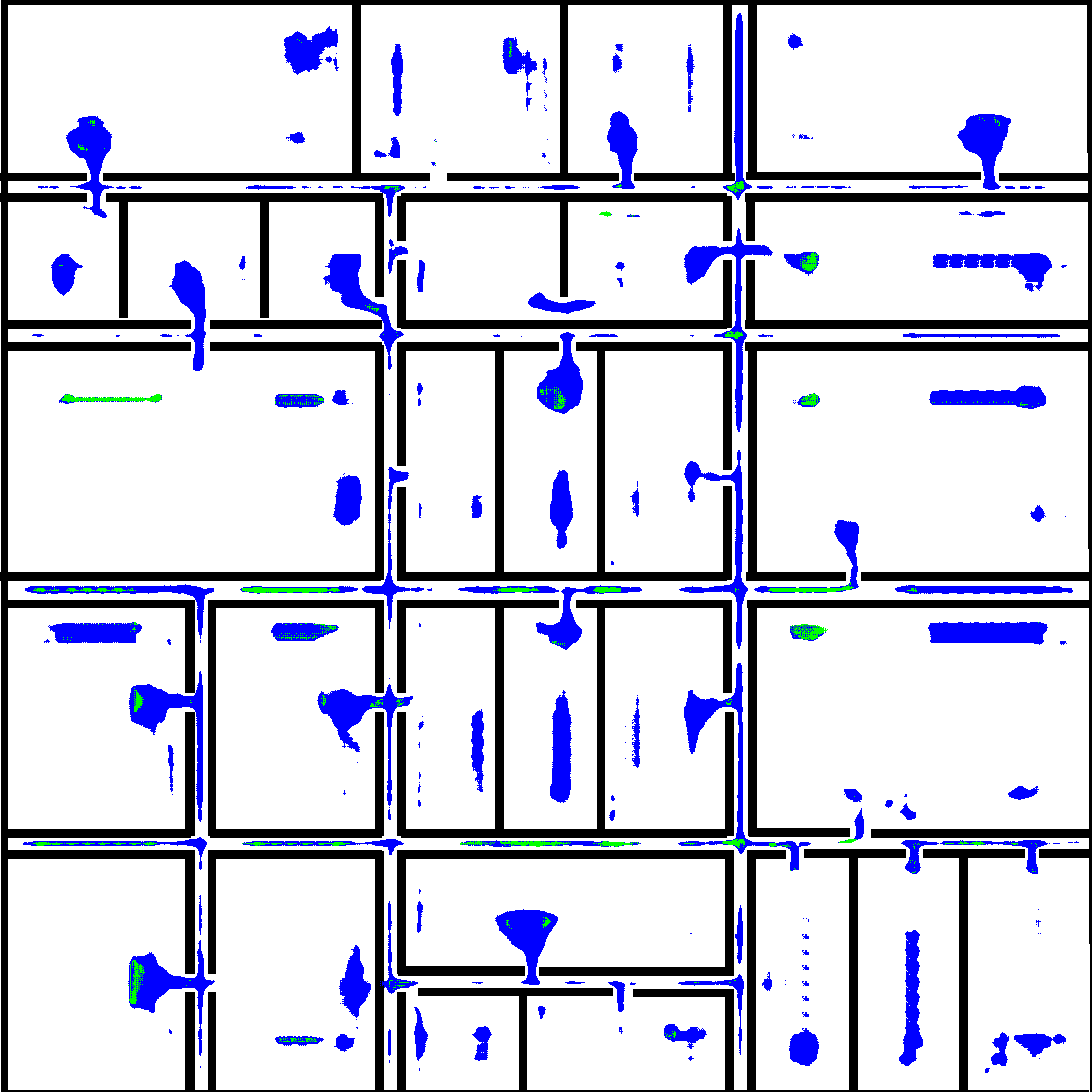} & \includegraphics{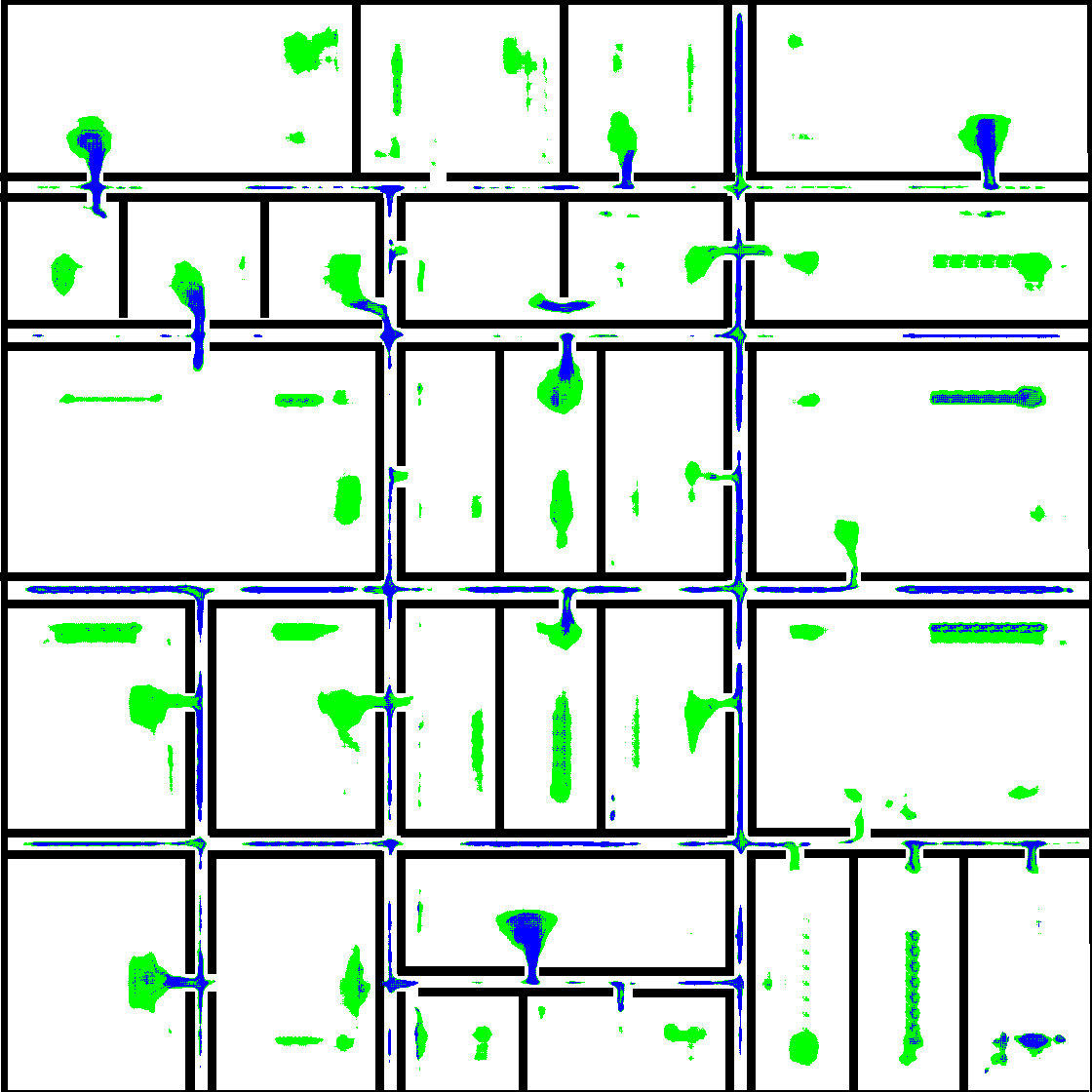} &\includegraphics{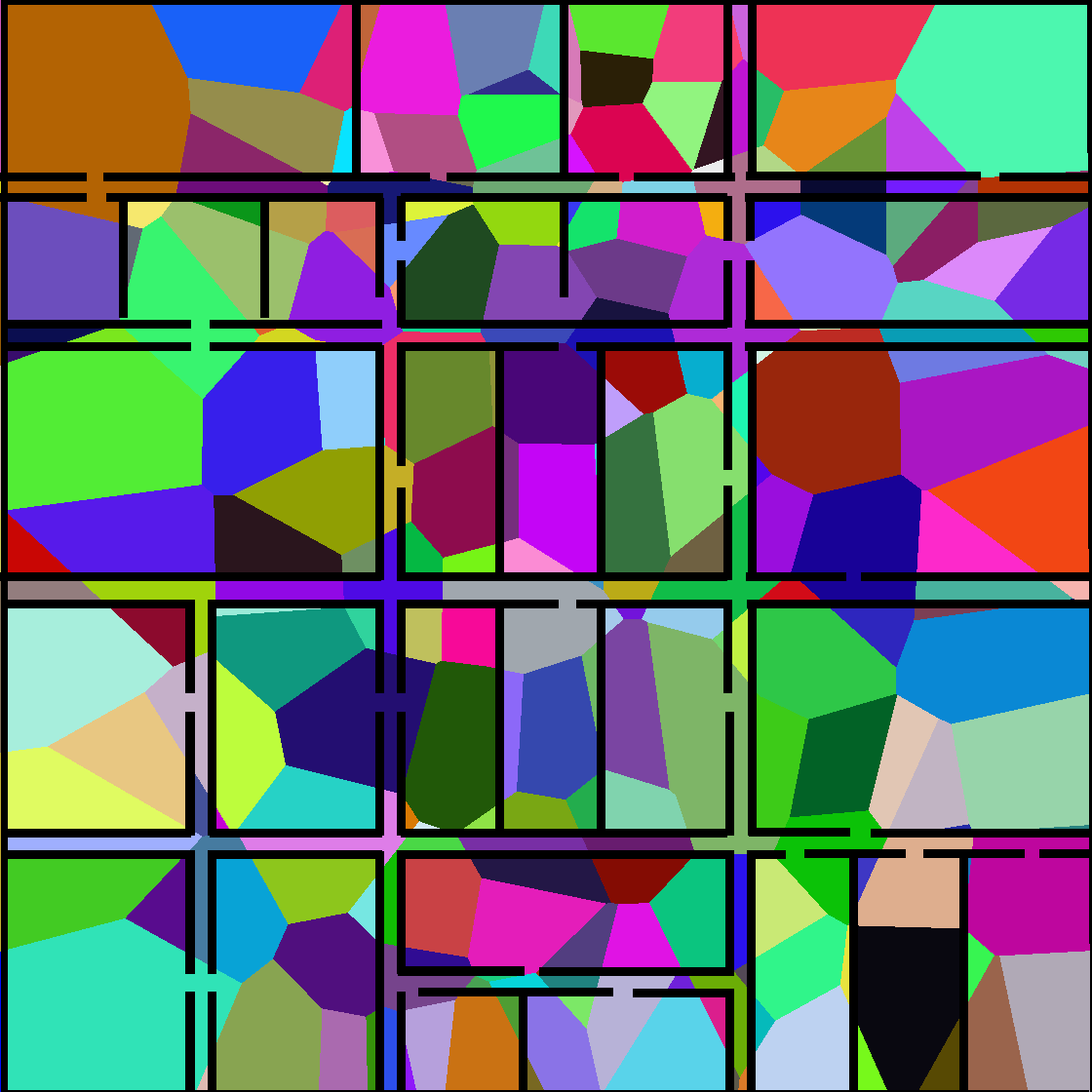}\\
          \includegraphics{env_42_0_1.png} &  \includegraphics{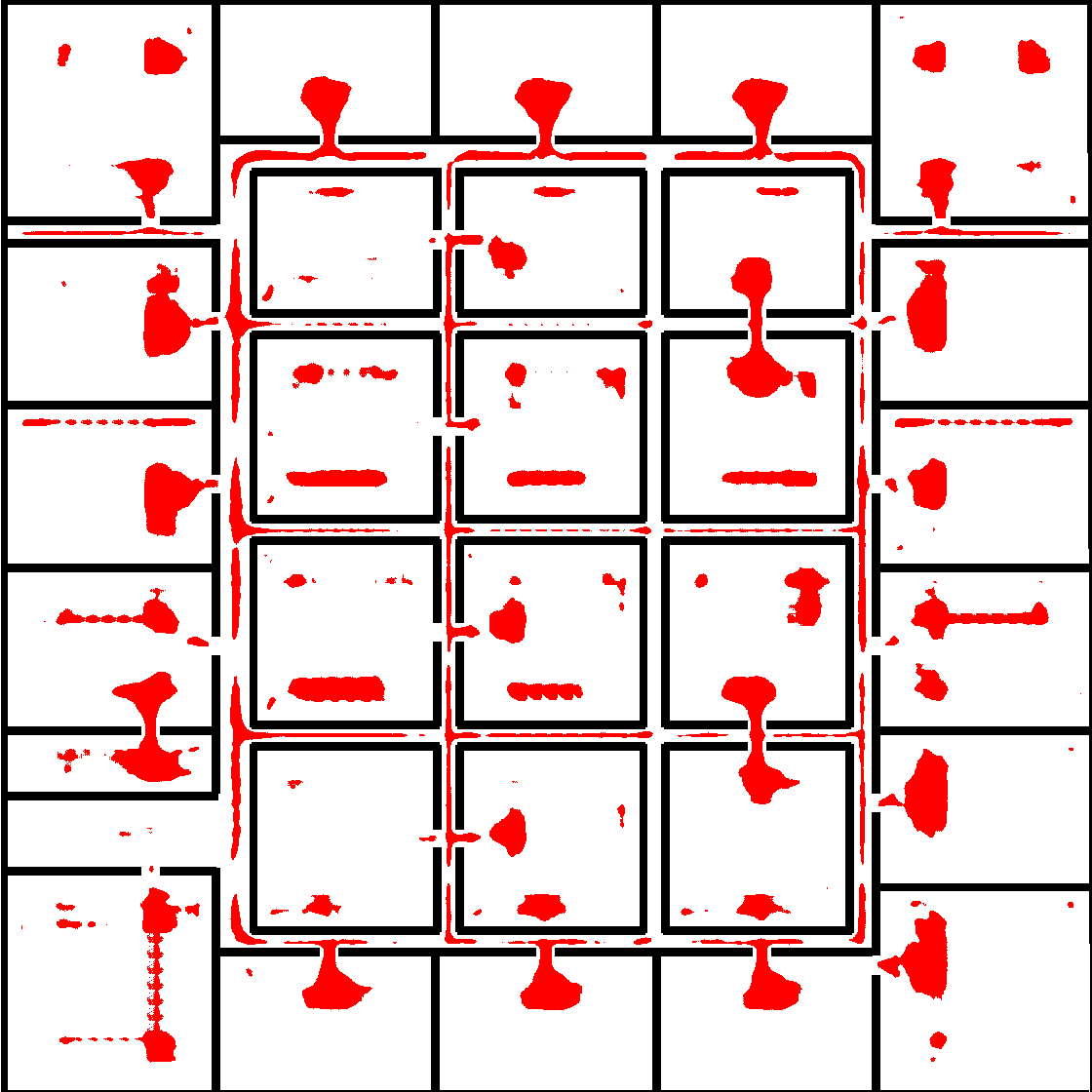} &  \includegraphics{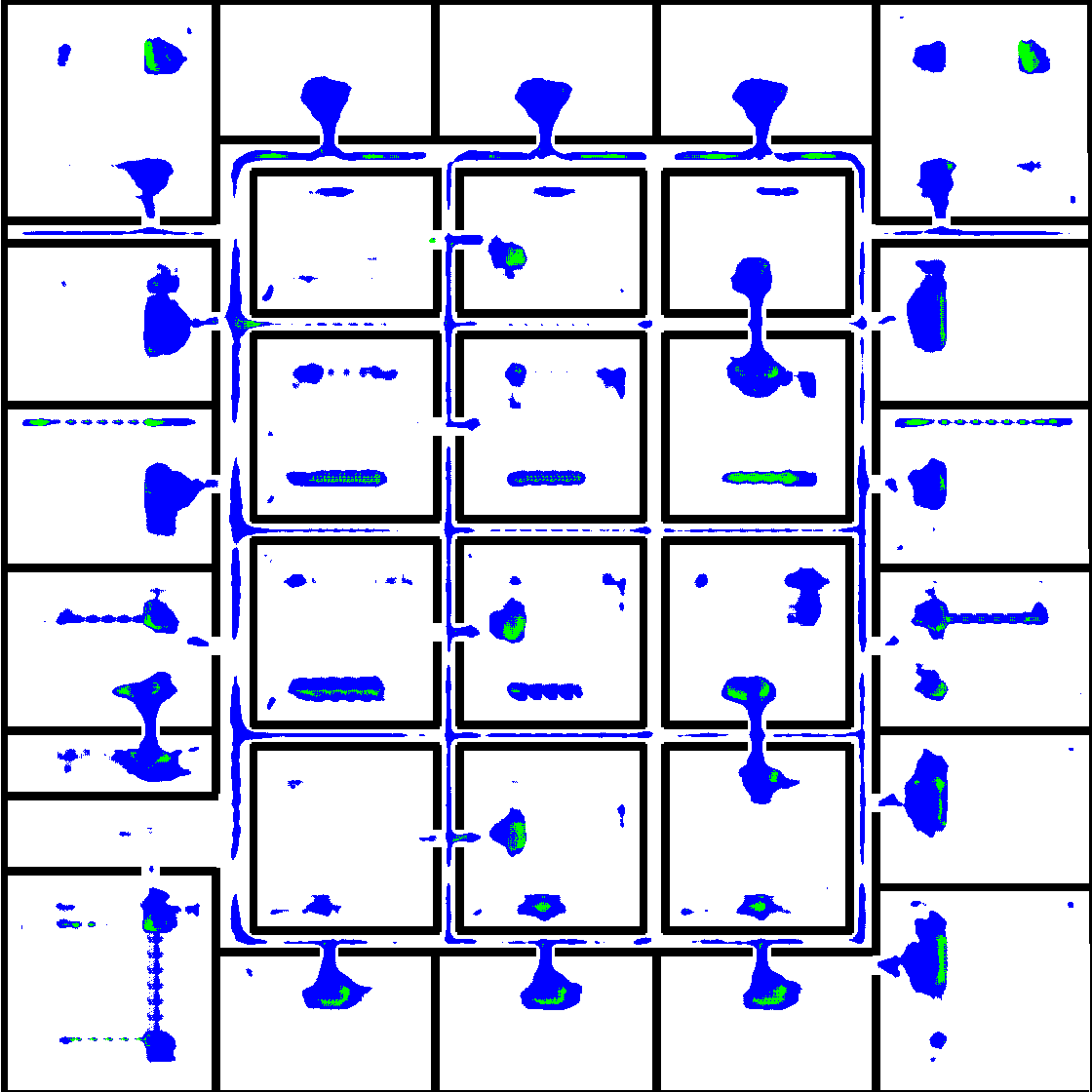} & \includegraphics{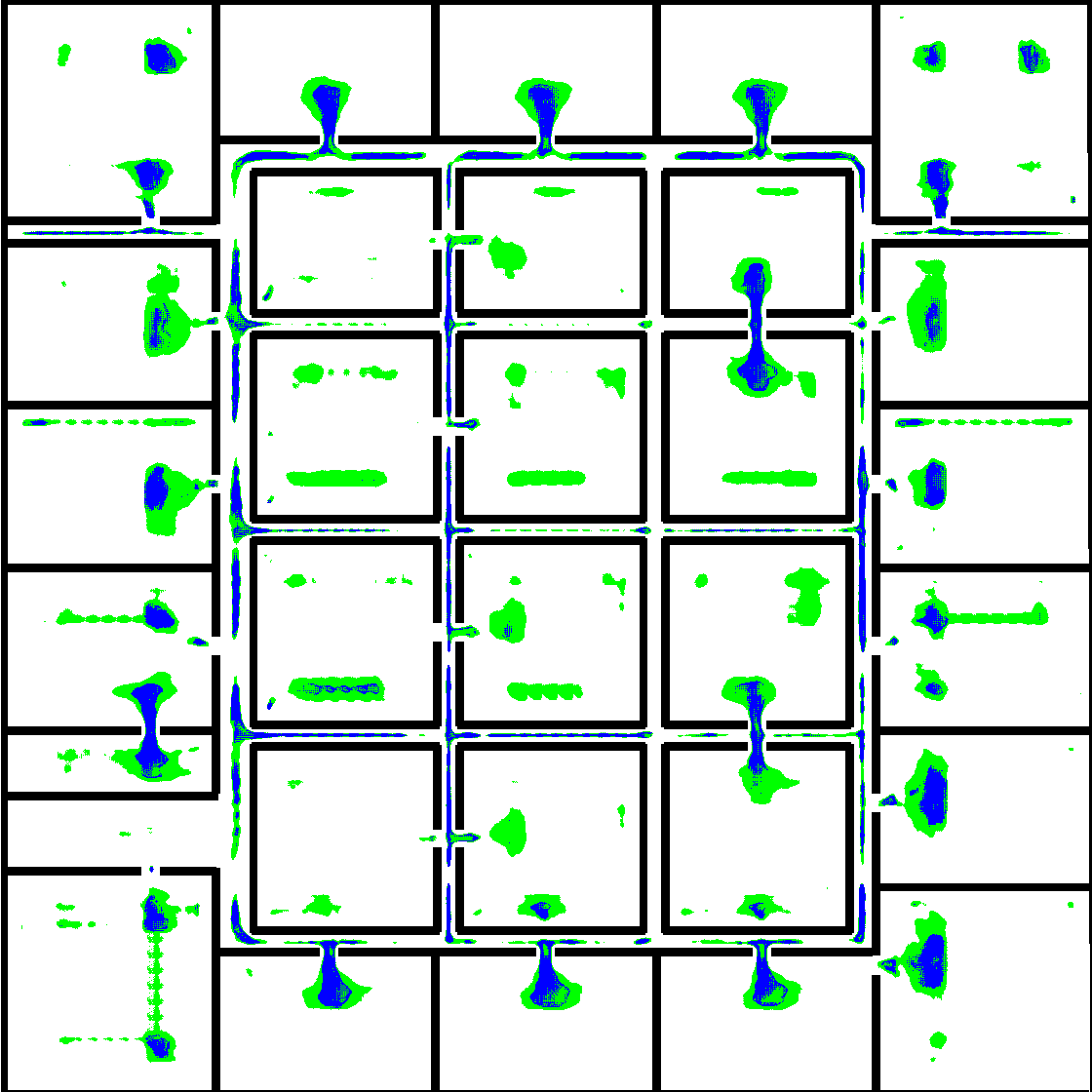} &\includegraphics{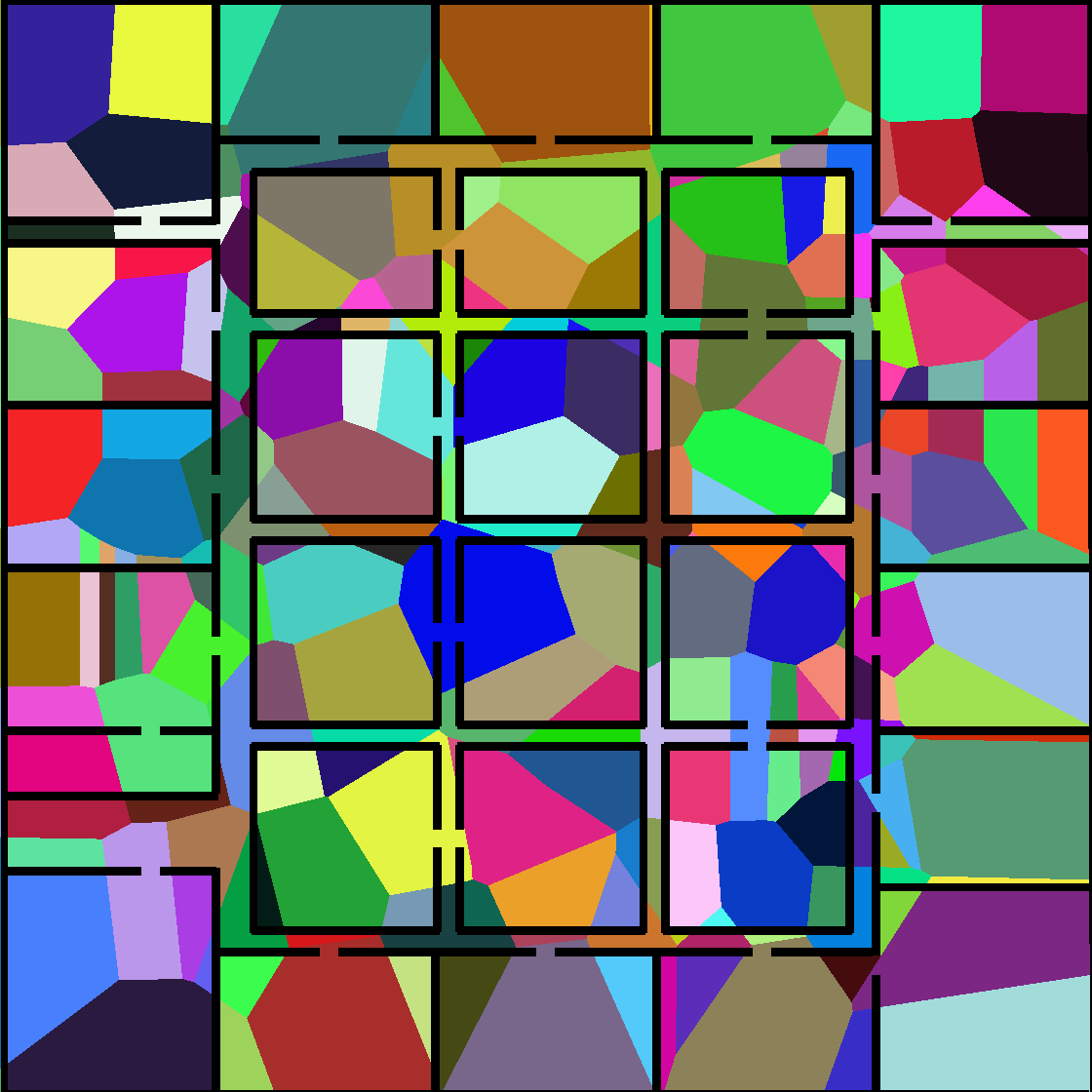}\\
         \includegraphics{env_43_0_1.png} &  \includegraphics{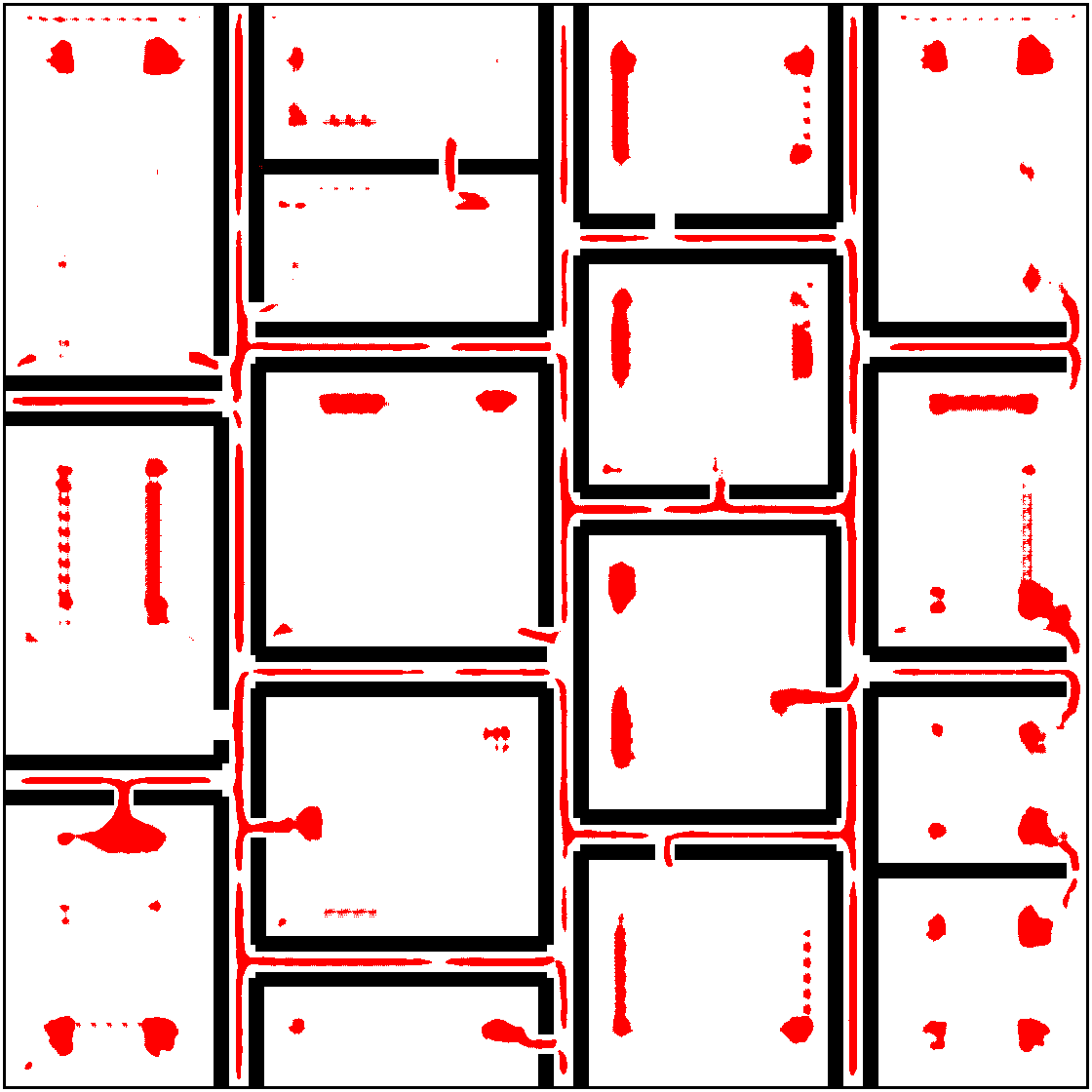} &  \includegraphics{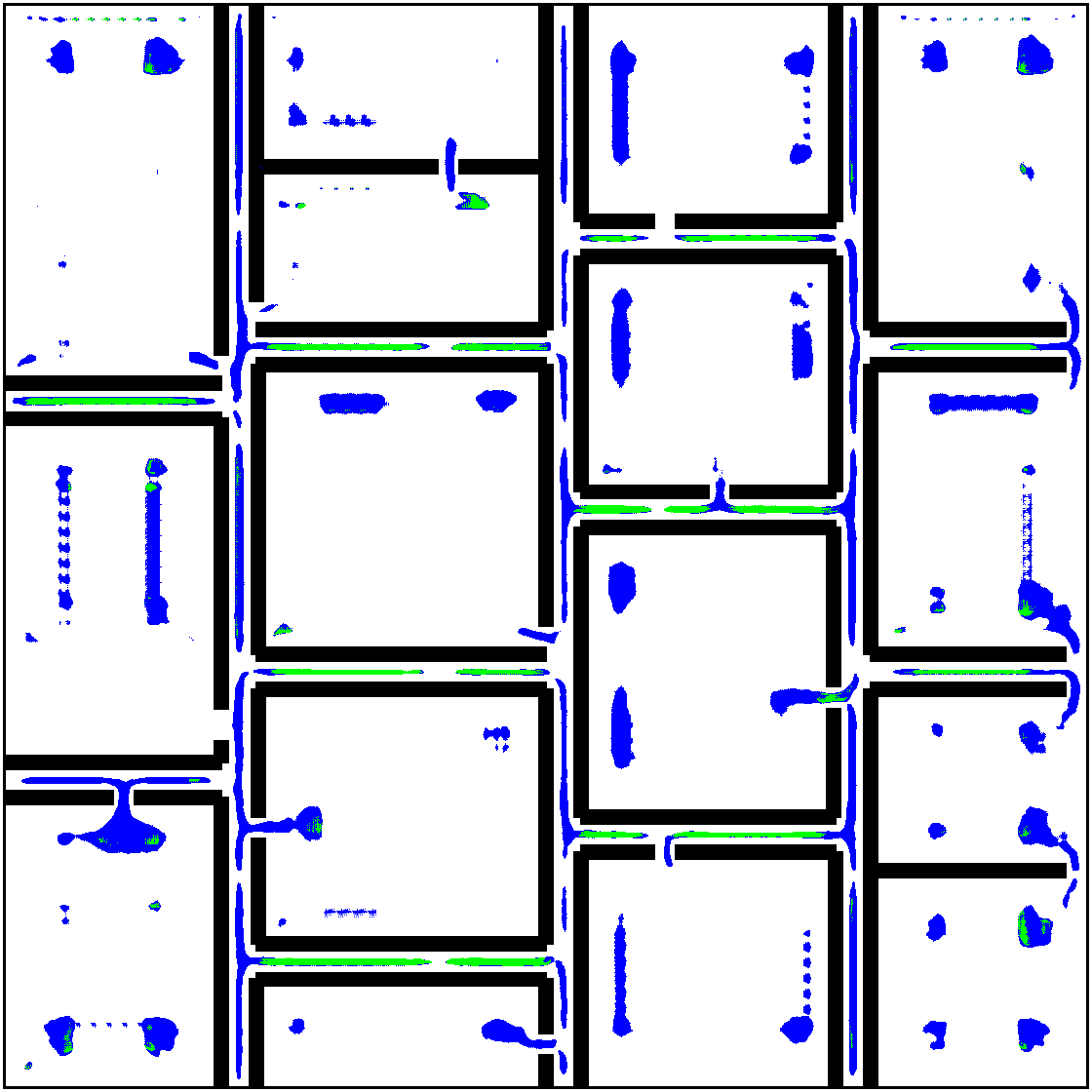} & \includegraphics{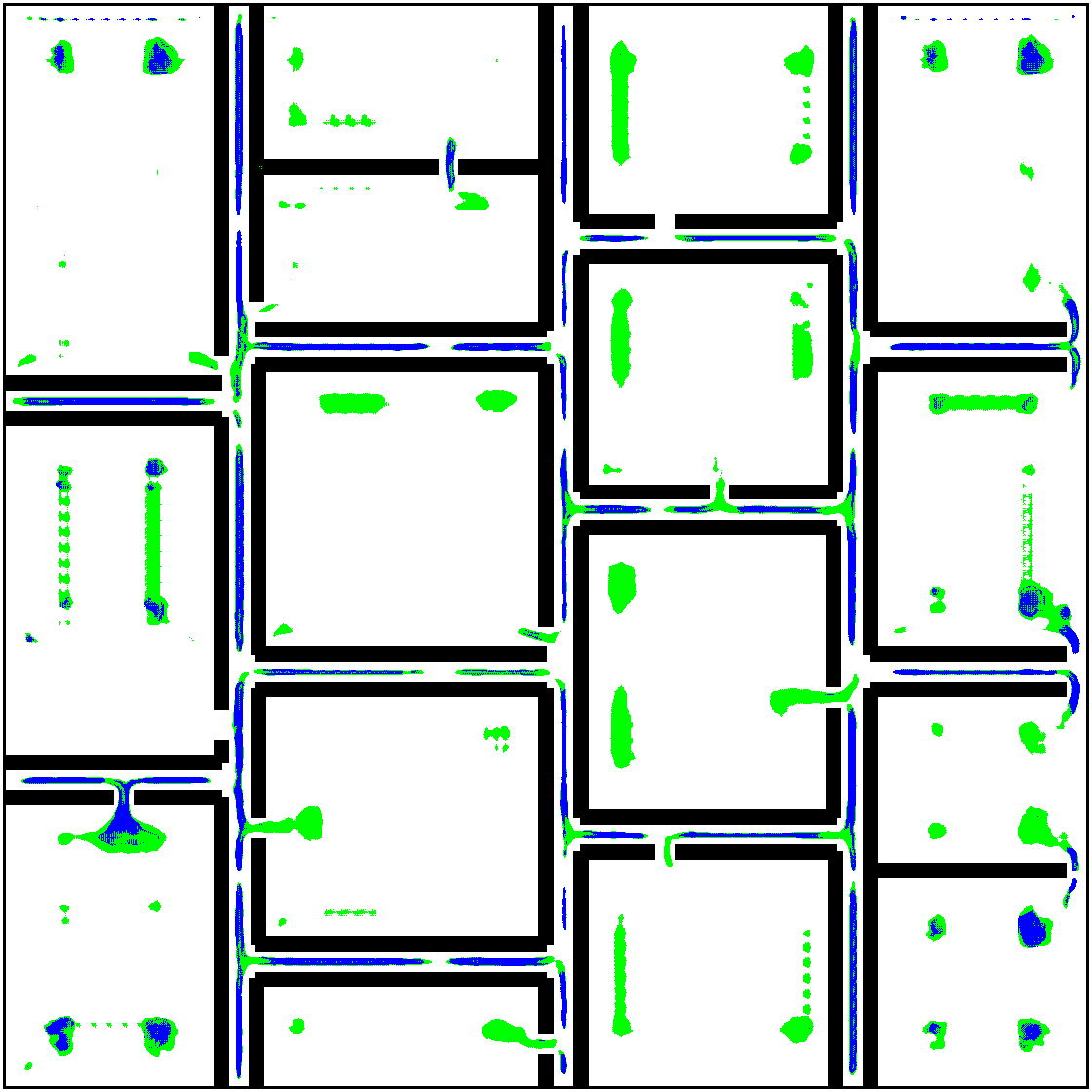} &\includegraphics{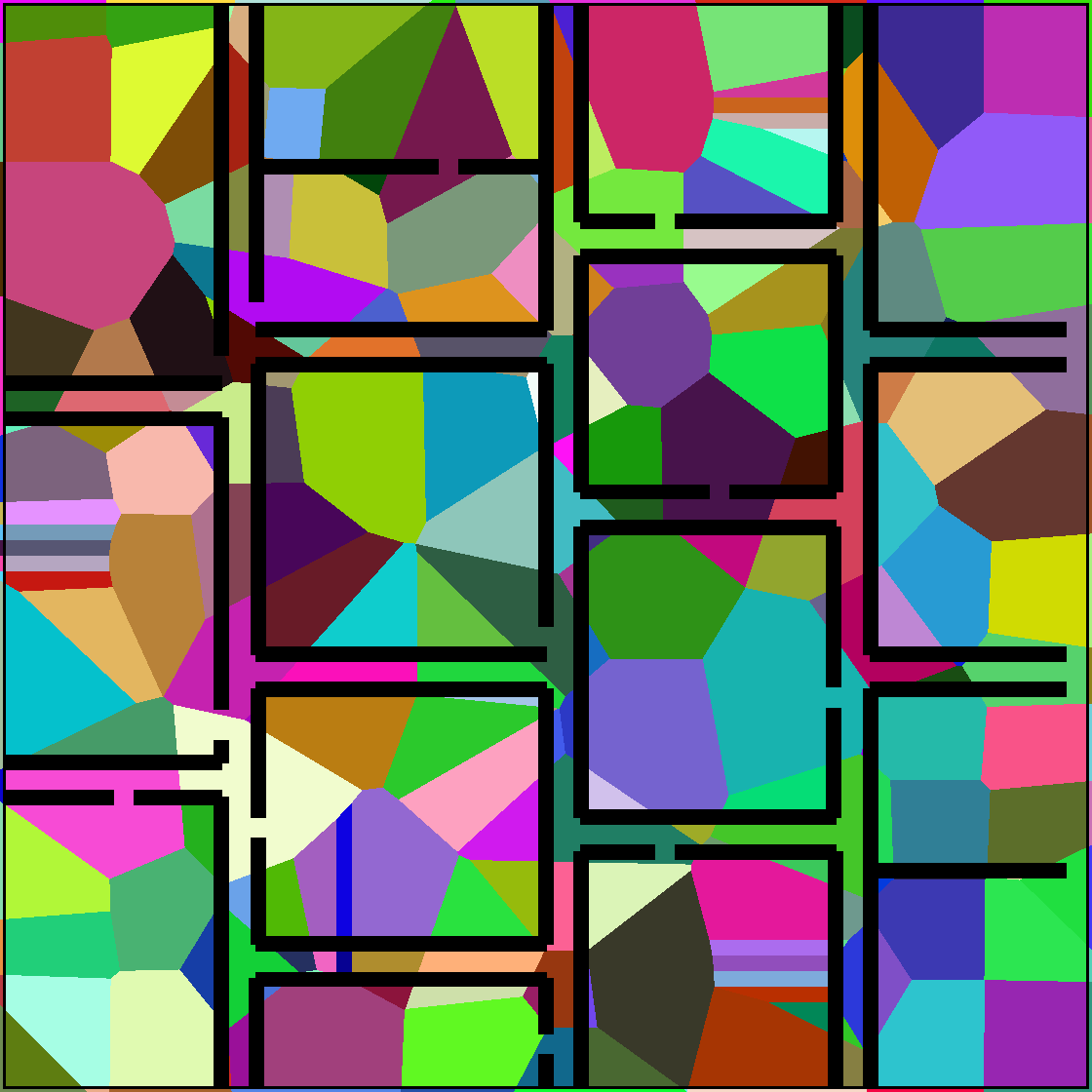}\\
        \centering (a) & \centering (b) & \centering (c) & \centering(d) & \centering(e)  \\
    \end{tabularx}
    \end{center}
    \caption{ Predicted critical regions and generated state abstractions for a $4$-DOF hinged robot. (a) Input to the environment. (b) Critical regions for the location of the robot's base link in the workspace. (c) Critical regions for the  orientation of  the the robot's base link. Blue regions are locations where the network predicted the robot to be horizontal and green regions are the regions where the network predicted the robot to be vertical. (d) Critical regions  for the hinge joint. Blue regions show that the network predicted it to be closer to $180^{\circ}$ and green regions show that the network predicted the hinge angle close to $90^{\circ}$ or $270^{\circ}$. (e) 2D projections of state abstraction generated by our approach. State abstractions are strictly for visualization as our approach does not require to generate this explicitly.}
    \label{fig:test_3dof}

\end{figure}

\subsection{$8$-DOF Fetch Robot}

\begin{figure}[h!]
 \vspace{0.3em}
    \setkeys{Gin}{width=\linewidth}
    \begin{center}
    \begin{tabularx}{0.5\textwidth}{X|X}
        \includegraphics{fetch_test1.png} &  \includegraphics{fetch_test1_2.png}   \\
        \includegraphics{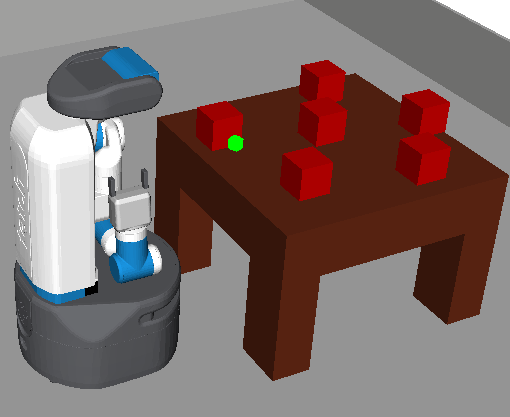} &    \includegraphics{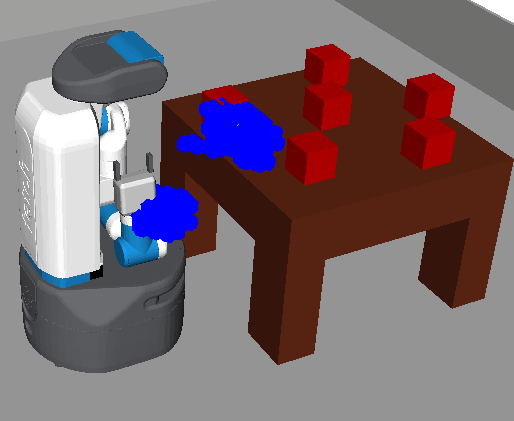}\\
        \centering (a) & \centering (b)  \\
    \end{tabularx}
    \end{center}
    \caption{ Predicted critical regions for an $8$-DOF Fetch robot. The green region in (a) shows the goal location for the end effector. (b) shows the critical regions generated by the learned model. Although the network predicts critical regions  for all the joints, only critical regions or end-effector's location in the workspace are shown.}
    \label{fig:test_fetch}
\end{figure}
  

\end{document}